\documentclass{article}
\newlength{\oodcolwidth}
\usepackage{amsthm}
\newtheorem{property}{Property}
\usepackage{tabularx}
\usepackage{array}
\newcolumntype{Y}{>{\centering\arraybackslash}X}
\usepackage{microtype}
\usepackage{graphicx}
\usepackage{subcaption}
\usepackage{booktabs} 
\usepackage{multirow}   
\usepackage[table,xcdraw]{xcolor}   
\usepackage{arydshln}
\usepackage{booktabs}
\usepackage{multicol}
\usepackage{tabularx}  
\usepackage{hyperref}
\usepackage{xcolor}  
\usepackage{graphicx}
\usepackage{caption}
\usepackage[dvipsnames]{xcolor}
\definecolor{RoyalBlue}{HTML}{006EBB}  
\definecolor{Chestnut}{HTML}{9E2A2F}  
\usepackage{makecell}    

\usepackage[accepted]{icml2026_arxiv}

\usepackage{amsmath}
\usepackage{amssymb}
\usepackage{mathtools}
\usepackage{amsthm}

\usepackage[capitalize,noabbrev]{cleveref}

\theoremstyle{plain}
\newtheorem{theorem}{Theorem}[section]

\newtheorem{lemma}[theorem]{Lemma}
\newtheorem{corollary}[theorem]{Corollary}
\theoremstyle{definition}

\theoremstyle{remark}

\usepackage[textsize=tiny]{todonotes}

\icmltitlerunning{Towards Fine-Grained Robustness: Attention-Guided Test-Time Prompt Tuning for Vision-Language Models}

\begin{document}

\twocolumn[
  \icmltitle{Towards Fine-Grained Robustness: Attention-Guided Test-Time Prompt Tuning for Vision-Language Models}



  \icmlsetsymbol{equal}{*}

\begin{icmlauthorlist}
  \icmlauthor{Jia-Wei Hai}{seu}
  \icmlauthor{Yijun Wang}{seu}
  \icmlauthor{Xiu-Shen Wei}{seu,cse}
\end{icmlauthorlist}

\icmlaffiliation{seu}{School of Computer Science and Engineering, and Key Laboratory of New Generation Artificial Intelligence Technology and Its Interdisciplinary Applications, Southeast University, China.}
\icmlaffiliation{cse}{School of Intelligence Science and Engineering, Southeast University, China}
\icmlcorrespondingauthor{Xiu-Shen Wei}{weixs.gm@gmail.com}

  \icmlkeywords{Machine Learning, ICML}

  \vskip 0.3in
]



\printAffiliationsAndNotice{}  

\begin{abstract}
Vision-Language Models (VLMs), such as CLIP, have achieved significant zero-shot performance on downstream tasks with various fine-tuning adaptation methods. However, recent studies have proven that adversarial attacks can significantly degrade the inference ability of VLMs, posing substantial risks to their practical applications. Prevalent test-time adaptation methods typically rely on multi-view augmentation to implement various fine-tuning strategies, which struggle to identify semantic information and are prone to destroying discriminative regions in fine-grained scenarios. To address these limitations, we propose Attention-Guided Test-Time Prompt Tuning (A-TPT), a semantics-preserving method designed for test-time adaptation. We first refine the gradient attention rollout mechanism to identify semantically meaningful regions surviving under adversarial attacks. Furthermore, we leverage them to guide the spatially varying augmentation intensities and multi-view ensemble for prompt tuning and inference. Extensive experiments demonstrate that A-TPT outperforms existing test-time adaptation methods on both adversarial and clean data. Codes are available at \url{https://github.com/SEU-VIPGroup/A-TPT}.
\end{abstract}

\section{Introduction}

\begin{figure}[t]
  \centering
  \includegraphics[width=0.47\textwidth]{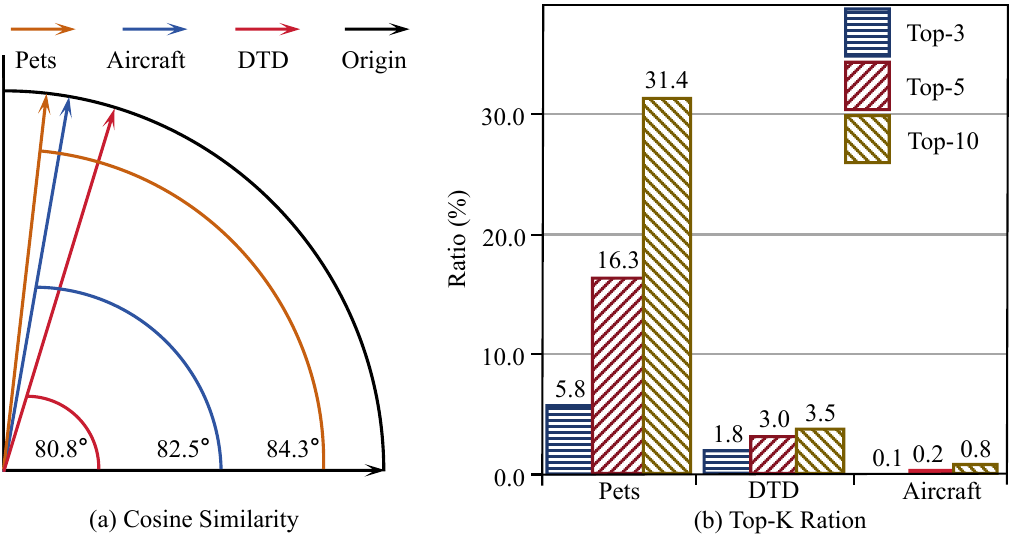}
  \caption{(a) Cosine similarity in the unit circle: adversarially attacked \textbf{(colored)} and original \textbf{(black)} feature vectors are highly divergent; (b) Ratio of true labels among the Top-K predictions under adversarial attacks: the true label of the input is pushed out of the Top-K predictions (ViT-B/16).}
  \label{fig:fig1}
  \vspace{-10pt}
\end{figure}

Vision-Language Models (VLMs) pretrained on large-scale image–text pairs \citep{MIR-2022-06-193} have become a versatile foundation for downstream vision tasks \citep{shin2022reco,zhang2023simple,wei2024task,E250095}.
However, they exhibit significant degradation under even subtle adversarial perturbations, raising serious safety concerns for real-world deployment \citep{NEURIPS2023_a97b58c4,yu2025benchmarking,A240241}.  
Training-time adaptation methods such as VPT \citep{li2024one} and FAP \citep{zhou2024few}, among others \citep{Hossain_2025_CVPR,dong2025robustsuperalignment}, have proven to be effective defense strategies.
However, their reliance on large amounts of labeled adversarial data incurs high computational costs, limiting practicality. 
Although advanced test-time adaptation methods \citep{shu2022test,yoon2024c,xing2025clip} are primarily designed for natural distribution shifts, they provide limited robustness when the feature space is adversarially distorted.

Based on multiple embeddings derived from augmented views of a test sample \citep{hendrycks2021many,feng2023diverse}, existing test-time defense methods \citep{tong2025zero,wang2025tapt,zanella2024test} have proposed various optimization strategies to resist adversarial attacks on generic data. However, they often perform limited on fine-grained recognition, since random region-editing behavior leads to the loss of discriminative regions \citep{pu2024fine}. 
Identifying and preserving discriminative semantic parts is the key to distinguishing fine-grained categories.

Existing semantics-preserving augmentation methods \citep{wang2025nas,ye2025fn,tan2025prototype} heavily rely on the quality of semantic directions and operate in the feature space, where attacked feature representations can be driven beyond the original decision boundary, as shown in Figure~\ref{fig:fig1}~(a). Among unsupervised semantics-preserving augmentation methods \citep{wang2019implicit,pu2024fine,C220396}, the logits of an input are commonly used as the self-supervised signal, which loses label-preserving ability under adversarial attacks (~Figure~\ref{fig:fig1}~(b)~). Moreover, semantic identification is often implemented as a regularization term in the training loss, making it difficult to decouple from the training stage. 

\textit{In this paper, our central question is therefore:} \textit{can we design a test-time adaptation method that identifies semantic information and leverages discriminative regions without learning in feature space even under adversarial attacks?} 
To this end, we propose Attention-Guided Test-Time Prompt Tuning (A-TPT), which utilizes robust visual attention as a semantic anchor to guide augmentation and multi-view ensemble. 
We first refine the Gradient Attention Rollout (GAR) \citep{chefer2021transformer}, since obtaining reliable semantic anchors requires robust attention maps. 
Although GAR is commonly used to identify discriminative regions, we demonstrate its vulnerability to adversarial perturbations. 
Specifically, we introduce a token-based gradient signal to refine its gradient computation, allowing attention to robustly highlight the discriminative regions against adversarial attacks. 
Following refined attention, A-TPT consists of two core modules: Attention-Guided Multi-View Augmentation and TV-Based Ensemble. 
Attention-guided Multi-view Augmentation utilizes refined attention to guide the spatially varying augmentation intensities at image-level, preserving high-attention regions while generating a set of diverse views.
To further leverage semantically meaningful views, we introduce a TV-based ensemble to measure the reliability of each view and weight them.
We hypothesize that views with large anisotropic Total Variation (TV) in their attention maps are semantically corrupted, either by adversarial attacks or irrelevant background. 
Thus, views with lower anisotropic TV are more reliable for containing discriminative semantic parts and contribute more to the ensemble. 
Finally, prompt tuning and the final inference build upon reliable views generated along the semantically meaningful directions. 
Extensive experiments tasks demonstrate that A-TPT not only identifies and preserves discriminative regions, but also substantially outperforms state-of-the-art (SOTA) test-time adaptation methods. 

In summary, our main contributions are:  
\begin{itemize}
  \item We refine the gradient signal of GAR with a simple but robust term against adversarial perturbations, enabling effective identification of unperturbed semantic information under adversarial attacks.
  \item We propose A-TPT, the first method that exploits discriminative semantic regions for guiding test-time adaptation, particularly in fine-grained scenarios.
  \item Extensive experiments on nine datasets demonstrate that A-TPT substantially outperforms existing methods on both clean and adversarial data distributions.
\end{itemize}

\section{Related Works}
\subsection{Adversarial Attacks and Defenses}
Adversarial attacks on Vision-Language Models (VLMs) can significantly degrade model inference with even minimal perturbations \citep{szegedy2014intriguing}. 
As a result, a variety of attack methods to generate adversarial samples have been proposed \citep{croce2020reliable,madry2018towards,goodfellow2015explaining}. 
Common approaches like FGSM \citep{goodfellow2015explaining} generate adversarial examples by perturbing in the direction of the loss gradient. 
PGD \citep{madry2018towards}, a standard measure of model robustness, further projects gradient descent onto a perturbed $L_p$ ball with an $\varepsilon$-bounded constraint. 
Additionally, multi-modal adversarial attacks have emerged to jointly exploit weaknesses in both modalities. 
Co-Attack \citep{zhang2022towards} and VLATTACK \citep{yin2023vlattack} extend single-modal attacks by perturbing both the image and text encoders, thereby creating more effective adversarial examples. 

Various defense methods have been proposed to counter adversarial attacks \citep{Hossain_2025_CVPR,zhou2024few,cui2024robustness}. Training on adversarial data improves the robustness of VLMs. SLADE \citep{Hossain_2025_CVPR} aligns clean and adversarial embeddings at both patch and global scales via symmetric stop-gradient contrastive learning to enhance robustness. FAP \citep{zhou2024few} enforces joint clean and adversarial semantic alignment through optimizing shared prompts, ensuring consistency between clean and adversarial distributions. Restoring adversarial inputs through a diffusion model is also a common but costly defense strategy \citep{nie2022diffusion,you2023beyond}. However, their training not only requires extra data costs but also compromises generalization across different datasets.

\subsection{Test-time Adversarial Adaptation for VLMs} 
Test-time adaptation \citep{shu2022test,abdul2023align,sui2025just} focuses on zero-shot generalization of pretrained models across test data, aiming to improve efficiency. Classic studies such as C-TPT \citep{yoon2024c}, DiffTPT \citep{feng2023diverse}, and VPT \citep{jia2022vpt} have achieved significant performance on out-of-distribution (OOD) benchmarks. However, they overlook robustness under adversarial perturbations, which are more closely aligned with the quality of real-world data.  

Recent studies \citep{zanella2024test,tong2025zero,wang2025tapt} extend to adversarial situations and propose different strategies based on multi-view augmentation. MTA \citep{zanella2024test} applies mean-shift augmentation and optimizes feature representations through aggregation of multiple views, which focuses on the consistent mode and density peak in the feature space. AOM \citep{tong2025zero} introduces Gaussian noise into multiple embeddings and optimizes both the original and augmented anchor feature representations by minimizing Euclidean distance. Methods based on entropy optimization fine-tune model parameters or prompts. TAPT \citep{wang2025tapt} optimizes both textual and visual prompts using the marginal entropy across augmented views. The recent prompt tuning method R-TPT \citep{sheng2025r} has demonstrated that entropy-based optimization yields strong robustness against adversarial attacks. R-TPT performs image-level augmentation and proposes pointwise entropy optimization for textual prompts across views. However, their augmentation mechanism suffers from destroying discriminative regions and semantically irrelevant background.

\subsection{Semantics-Preserving Augmentations}
Semantics-preserving augmentations \citep{wang2021regularizing,hu2024asymmetric,yang2025adaaugment} are typically employed in fine-grained scenarios, aiming to diversify samples along semantically meaningful directions. FN-NET \citep{ye2025fn} exchanges discriminative regions within the same subclass in the feature space and employs a distillation loss to facilitate the learning process. NAS \citep{wang2025nas} learns a visual dictionary as the semantic bridge between visual and linguistic representations to generate augmented samples at the token level. Studies \citep{wang2019implicit,pu2024fine} use original logits as the self-supervised label to train a learnable data aug-network, which is commonly used to preserve discriminative parts. Although these semantics-preserving augmentation methods are widely explored, they are rarely adopted for test-time adaptation because learning semantic information in the feature space is difficult to decouple from the training stage (as discussed in Appendix~\ref{Appendix A.2}).  Methods \citep{michaeli2024advancing,10654988,Li_2025_CVPR} based on synthetic data can effectively generate high-quality views but are limited by costly diffusion models. 

\begin{figure*}[t]
  \centering
  \includegraphics[width=0.8\textwidth]{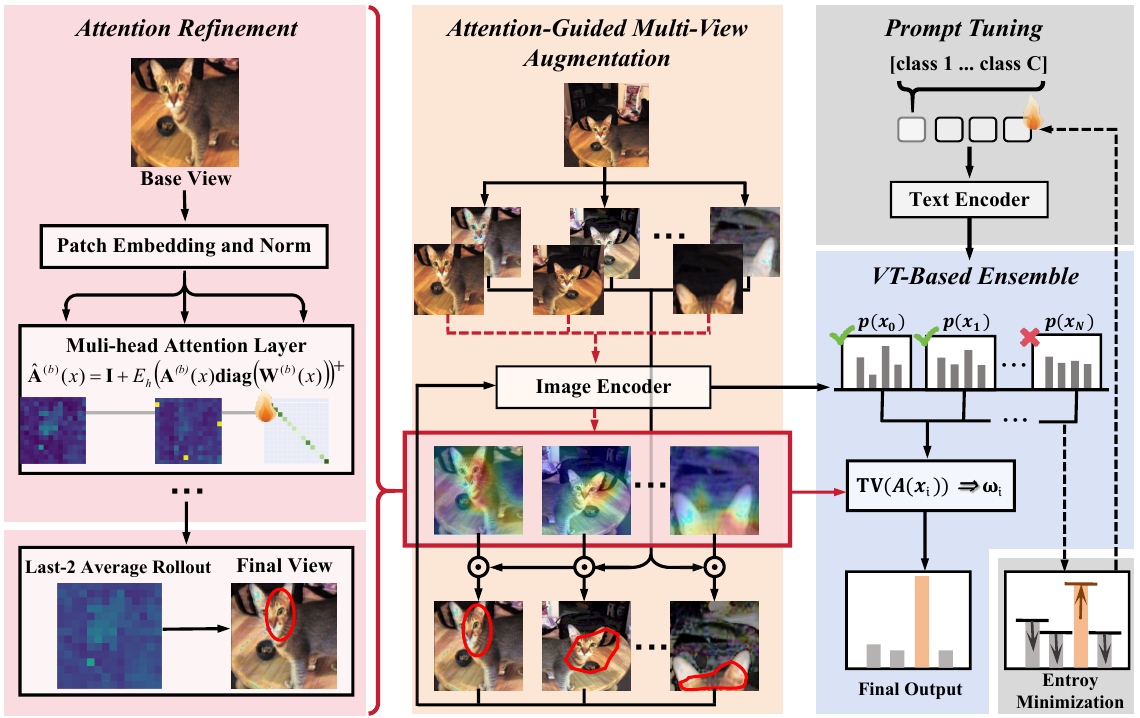}
  \caption{The pipeline of A-TPT. Given an input sample, Attention Refinement based on token-gradient is used to identify semantic parts. Then, Attention-Guided Multi-View Augmentation builds a set of semantics-preserving views for fine-tuning learnable prompts. After selecting low-entropy views followed by prompt tuning, TV-Based Ensemble weights reliable views in the final inference process.}
  \label{fig:fig2}
  \vskip -0.1in
\end{figure*}
\section{Methodology} 
We propose Attention-Guided Test-Time Prompt Tuning (A-TPT) to effectively identify and leverage discriminative semantic regions that remain under adversarial attacks. In Sec.~\ref{preliminaries}, we review the foundational background of CLIP, test-time prompt tuning and gradient attention rollout. We then introduce three components of A-TPT in Sec.~\ref{Framework}.
\subsection{Preliminaries}
\label{preliminaries}
\paragraph{CLIP} A widely used backbone in zero-shot inference topics, CLIP \citep{radford2021learning}, has established a standard paradigm of contrastive learning for VLMs. Considering a $C$-way classification task with class names $\{t_c\}_{c=1}^{C}$, CLIP extracts textual features $\textbf{g}_c$ using a prompt template (\textit{e.g.}, “a photo of a \{class\}") with the text encoder $G(\cdot)$, and visual features $\textbf{f}_i$ with the image encoder $F(\cdot)$. The probability that a test sample $x_i$ belongs to class $c$ is given by:
\begin{equation} \label{eq:1} 
p_c(x_i)=\frac{\exp\!\big(\cos(\textbf{f}_i,\textbf{g}_c)/\tau\big)}
{\sum_{j=1}^{C}\exp\!\big(\cos(\textbf{f}_i,\textbf{g}_j)/\tau\big)},
\end{equation}
where $\cos(\cdot,\cdot)$ denotes cosine similarity and $\tau$ is a temperature. The predicted label is $\hat{c}=\arg\max_{c} p_c(x_i)$.

\paragraph{Test-Time Prompt Tuning (TPT)} Methods like TPT \citep{shu2022test}, C-TPT \citep{yoon2024c}, and RTPT \citep{sheng2025r} have focused on tuning textual prompts $P$ at test time by minimizing prediction entropy across various augmented views $\{x_i\}_{i=0}^{N}$ of a test sample ${x}_0$. The optimization of TPT without additional training priors is:
\begin{equation} \label{eq:2}
\mathcal{L}_{\mathrm{H}}({p})
= -\frac{1}{|\mathcal{B}|}\sum_{i\in \mathcal{B}}\sum_{c=1}^{C} p_c(x_i)\log p_c(x_i),
\end{equation}
where $\mathcal{B}$ denotes a set of low-entropy samples selected from all augmented views $\{x_i\}_{i=0}^{N}$. Intuitively, $P$ is updated for $T$ steps by gradient descent to refine the decision boundary: 
\begin{equation} \label{eq:3}
P^{(T)}=P^{(T-1)}-\eta\,\nabla_{P}\mathcal{L}_{\mathrm{H}}(P^{(T-1)}).
\end{equation}

\paragraph{Gradient Attention Rollout (GAR)} GAR \citep{chefer2021transformer,gupta2022vitol} element-wise weights attention matrices using the gradient of a target logit, and obtains the final CLS-to-patch attribution after cross-layer rollout multiplication. Following GAR, attention focuses on the discriminative semantic parts for a specific target. For the $b$-th transformer block, let $s$ and $h$ denote the number of tokens and the number of heads in each attention layer, and attention tensor is denoted as $\mathbf{A}^{(b)}(x)  \in \mathbb{R}^{h\times s\times s}$. Given the target logit $S(x)=\mathrm{logits}_{t}(x)$ for an input $x$, GAR constructs a gradient-weighted transition matrix $\hat{\mathbf A}^{(b)}(x)$ at the $b$-th block as follows:
\begin{equation} \label{eq:4}
\hat{\mathbf A}^{(b)}(x) \;=\;\mathbf{I} \;+\; E_h\!\Big(\nabla_{\mathbf{A}^{(b)}(x)}S(x) \odot \mathbf{A}^{(b)}(x)\Big)^{+},
\end{equation}
where $\odot$ denotes the Hadamard product, $\nabla_{\mathbf{A}^{(b)}(x)}S(x)$ is the gradient matrix corresponding to the attention matrix $\mathbf{A}^{(b)}(x)$, and $\mathbf{I}$ is the identity matrix accounting for residual connections. $E_h(\cdot)^{+}$ denotes the mean operation over the head dimension, with negative values clamped to zero. The final transition matrix $\hat{\mathbf{A}}(x)$ is obtained by recursively multiplying over $B$ blocks: 
\begin{equation} \label{eq:5}
\hat{\mathbf A}(x)  \;=\; \prod_{b=1}^{B}\hat{\mathbf A}^{(b)}(x) \;\in\; \mathbb{R}^{s\times s}.
\end{equation}
GAR takes the first row of $\hat{\mathbf A}(x)$ corresponding to the $[CLS]$ token and removes the attention of the $[CLS]$ token with itself. The resulting attention embedding of size ($s-1$) is reshaped to size ($\sqrt{s-1}\times \sqrt{s-1}$) to obtain the final CLS-to-patch attention map $\mathbf A(x)$. 
 
\subsection{The A-TPT Framework}
\label{Framework}
A-TPT performs semantics-preserving prompt tuning in attention space and requires neither training knowledge nor additional models. As depicted in Figure \ref{fig:fig2}, \textit{1)} \textit{Attention Refinement} first accurately identifies unperturbed fine-grained parts in specific multiple views; \textit{2) Attention-Guided Multi-View Augmentation} is responsible for semantics-preserving augmentation; \textit{3) TV-Based Ensemble} weights the multiple views based on the quality of attention distribution, filtering the semantic-empty and semantic identifying failed views.

\paragraph{Gradient Attention Refinement} Our attention-guided multi-view augmentation described in Eq.~\eqref{eq:6}--\eqref{eq:8} requires a robust semantic anchor $\mathbf A(x)\in\mathbb{R}^{H\times W}$. Existing gradient signal $\nabla_{\mathbf A^{(b)}(x)}S(x)$ used in GAR is sensitive to adversarial perturbations, introducing scattered isolated peaks in the spatial attention map as  proven in Appendix \ref{Appendix B.}. 

Therefore, we replace the attention-based gradient signal with a robust token-based gradient signal as the target-specific weighting. Specifically, let $\mathbf T^{(b)}(x)\in\mathbb{R}^{s\times d}$ denote the token embeddings entering the $b$-th block. We define a token-based weighting vector as following:\\
\vspace{-5pt}
\begin{equation} \label{eq:9}
\textbf{W}^{(b)}(x) = \mathcal{N}\!\left( \left[\langle \mathbf{T}^{(b)}(x), \nabla_{\mathbf{T}^{(b)}(x)} S(x) \rangle_d \right]_+ \right),
\end{equation}
where $\langle \cdot , \cdot\rangle $ is the inner product over the embedding dimension $d$, $[\cdot]_+$ clamps negatives to 0, $\mathcal N(\cdot)$ denotes $\ell_1$ normalization. We then weight the attention matrix along the source-token dimension via column scaling and our gradient-weighted transition matrix at block $b$ is:
\vspace{-0.5pt}
\begin{equation} \label{eq:10}
\hat{\mathbf{A}}^{(b)}(x) = \mathbf{I} + E_h\!\left( \mathbf{A}^{(b)}(x) \, \mathrm{diag}\left( \textbf{W}^{(b)}(x) \right) \right)^+.
\end{equation}
Additionally, we retain only the last two transition matrices $\hat{\mathbf A}^{(B-1)}(x)$ and $\hat{\mathbf A}^{(B)}(x)$ to suppress shallow-layer noise and apply a last-two-layer average stabilization. The average transition matrix of last two layers is:
\begin{equation} \label{eq:11}
\hat{\mathbf A}_{\text{avg}}(x)=\frac{\hat{\mathbf A}^{(B-1)}(x)+\hat{\mathbf A}^{(B)}(x)}{2}.
\end{equation}

Finally, $\hat{\mathbf A}(x)=\hat{\mathbf A}^{(B)}(x)\,\hat{\mathbf A}_{\text{avg}}(x)$ is used to obtain the final CLS-to-patch attention $\mathbf A(x)$. 
In summary, we replace the weighting term ${\mathbf A}^{(b)}(x)\odot \nabla_{{\mathbf A}^{(b)}{(x)}}S(x)$ with ${\mathbf A}^{(b)}(x)\,\mathrm{diag}\!\big(\mathbf W^{(b)}(x)\big)$ to decouple the weighting from attention-based gradient signal. This substitution avoids the second-order sensitivity of $\nabla_{\mathbf A^{(b)}(x)}S(x)$ to perturbations, which can induce scattered peaks in the final attention map and destabilize subsequent mask construction described in Eq.~\eqref{eq:6}--\eqref{eq:8}. By contrast, $\mathbf W^{(b)}(x)$ aggregates contributions over the embedding dimension via source-token column scaling rather than operating on individual attention edges, and the resulting CLS-to-patch attribution is smoother and reliable for masks. (Proof in Appendix \ref{Appendix B.}).

\paragraph{Attention-Guided Multi-View Augmentation} Multi-view augmentation helps mitigate adversarial noise and improve prediction consistency. However, for fine-grained tasks, noisy augmentations often corrupt discriminative parts \citep{tan2025prototype}. To address this issue, we propose attention-guided multi-view augmentation. We treat the robust attention map, described in Eq.~\eqref{eq:9}--\eqref{eq:11}, as a semantic anchor and apply spatially varying augmentation intensities: high-attention regions are prioritized for semantic protection, while low-attention regions emphasize diversity.

Given a test sample $x_0$, we first construct a set of base views $\{b_i\}_{i=0}^{N}$ using Random-Flip and Center-Crop, along with aggressive views $\{\tilde{x}_i\}_{i=0}^{N}$ generated by AugMix. The spatial attention $\mathbf A(b_i)\in\mathbb{R}^{H\times W}$ of a base view is extracted from the visual encoder. We divide spatial locations into high- and low-attention regions using a ratio $r$. Let $J = \lceil rHW\rceil$, and $A_{(J)}(b_i)$ denote the $J$-th largest value of $\mathbf A(b_i)$ (the top-$r$ threshold).
For any spatial location with attention value $A$, we define the high-attention mask as:
\begin{equation} \label{eq:6}
M_{\text{high}}(r)=
\begin{cases}
1, & A \ge A_{(J)}(b_i),\\
0, & A < A_{(J)}(b_i),
\end{cases}
\end{equation}
where the low-attention mask is $M_{\text{low}}(r)=1-M_{\text{high}}(r)$. The Mixing strength $\lambda(r) \in \{m_{\text{high}},\, m_{\text{low}}\}$ in the high-attention and low-attention regions is defined as: 
\begin{equation} \label{eq:7}
\lambda(r)=M_{\text{high}}(r)\,m_{\text{high}}+M_{\text{low}}(r)\,m_{\text{low}},
\end{equation}
then, attention-guided augmented views are generated as:
\begin{equation} \label{eq:8}
{x}_i=\big(1-\lambda(r)\big)\odot b_i+\lambda(r)\odot \tilde{x}_i.
\end{equation}

\paragraph{TV-Based Ensemble of Multiple Views} The TV-based ensemble strategy is employed to weight predictions from selected low-entropy views and obtain a reliable final output. Specifically, for each selected view $\tilde{x}_i$ and its attention map $\mathbf {A}({x}_i)\in\mathbb{R}^{H\times W}$, we quantify the spatial fragmentation of an attention map by the anisotropic Total Variation $(\mathrm{TV})$:
\begin{equation} \label{eq:12}
\begin{aligned}
\mathrm{TV}(\mathbf {A}({x}_i))
&= \sum_{u=1}^{H-1}\sum_{v=1}^{W}\left|A_{u+1,v}-A_{u,v}\right| \\
&\quad + \sum_{u=1}^{H}\sum_{v=1}^{W-1}\left|A_{u,v+1}-A_{u,v}\right| .
\end{aligned}
\end{equation}
Here $A_{u,v}$ is the value at location $(u,v)$ in the attention map of each view. This expression sums value differences between neighboring patches along vertical and horizontal directions. $\left|A_{u+1,v}-A_{u,v}\right|$ and $\left|A_{u,v+1}-A_{u,v}\right|$ measure the magnitude of attention change between neighboring patches. The first term compares two vertically adjacent positions $(u,v)$ and $(u+1,v)$. The second term compares two horizontally adjacent positions $(u,v)$ and $(u,v+1)$. Large $\mathrm{TV}$ indicates that $A_{u,v}$ varies drastically, which means more fragmented attention maps with stronger local abrupt changes. In contrast, small $\mathrm{TV}$ means smoother and more continuous with more consistent neighboring position in attention maps. Our experiments demonstrate that more fragmented or overly peaky attention distributions often indicate invalid augmented views, such as background-dominated views or views corrupted by high-contrast artifacts. Therefore, we use $\mathrm{TV}$ to measure the reliability of each view and obtain weights $w_i$ for the final prediction $\hat{c}$:
\begin{equation} \label{eq:13}
w_i=\frac{\exp\!\big(-\mathrm{TV}(\mathbf {A}({x}_i))\big)}{\sum_{j\in\mathcal{B}}\exp\!\big(-\mathrm{TV}(\mathbf {A}({x}_j))\big)},
\end{equation}
\begin{equation} \label{eq:14}
\hat{c}=\arg\max_{c}{\sum_{i\in \mathcal{B}}} w_i\, p_c\!({x}_i).
\end{equation}

\renewcommand{\arraystretch}{1.1}
\begin{table*}[t]
  \caption{\small Adversarial accuracy of test-time adaptation methods on ImageNet and fine-grained classification tasks via pretrained ViT-B/16 and ViT-B/32, with the best results shown in \textcolor{Chestnut}{\textbf{bold red}} (ViT-B/16) and \textcolor{RoyalBlue}{\textbf{bold blue}} (ViT-B/32).}  
    \label{Table 1}
  \vskip -0.1in
  \centering
  \resizebox{\textwidth}{!}{  
    \begin{tabular}{ccccccccccc}
      \toprule
      \rowcolor[HTML]{FFFFFF} 
      \multicolumn{2}{c}{\cellcolor[HTML]{FFFFFF}\textbf{Method}}       & \textbf{Pets} & \textbf{Caltech101} & \textbf{Cars} & \textbf{DTD} & \textbf{UCF101} & \textbf{EuroSAT} & \textbf{Flower102} & \textbf{Aircraft} & \textbf{Average} \\ \hline
      \rowcolor[HTML]{FFFFFF} 
      \cellcolor[HTML]{FFFFFF}                               & ViT-B/16 & 0.0          & 0.0                & 0.0          & 0.0         & 0.0            & 0.0             & 0.0               & 0.0              & 0.0             \\ 
      \rowcolor[HTML]{EFEFEF} 
      \multirow{-2}{*}{\cellcolor[HTML]{FFFFFF}CLIP}         & ViT-B/32 & 0.2          & 0.0                & 0.0          & 0.0         & 0.0            & 0.0             & 0.0               & 0.0              & 0.0             \\ \hline
      \rowcolor[HTML]{FFFFFF} 
      \cellcolor[HTML]{FFFFFF}                               & ViT-B/16 & 10.4          & 8.4                & 2.9          & 4.5         & 1.6            & 0.4             & 7.4               & 0.5              &    4.5          \\
      \rowcolor[HTML]{EFEFEF} 
      \multirow{-2}{*}{\cellcolor[HTML]{FFFFFF}TTC}          & ViT-B/32 & 11.8         & 22.7                & 2.3          & 4.7         & 6.1            & 3.0             & 3.2               & 1.0              &     6.9         \\ \hline
      \rowcolor[HTML]{FFFFFF} 
      \cellcolor[HTML]{FFFFFF}                               & ViT-B/16 & 51.2          & 74.7                & 26.0          & 25.1         & 30.6            & 2.2             & 36.3               & 8.7              &      31.9        \\
      \rowcolor[HTML]{EFEFEF} 
      \multirow{-2}{*}{\cellcolor[HTML]{FFFFFF}TPT-Ensemble}          & ViT-B/32 & 52.5          & 74.9                & 25.9          & 28.6        & 36.9            & 11.9             & 36.1              & 7.9              &      34.3            \\ \hline
      \rowcolor[HTML]{FFFFFF} 
      \cellcolor[HTML]{FFFFFF}                               & ViT-B/16 & 51.8         & 72.1                & 18.5          & 16.2         & 27.5            & 1.2             & 27.9               & 4.3              &     27.4         \\
      \rowcolor[HTML]{EFEFEF} 
      \multirow{-2}{*}{\cellcolor[HTML]{FFFFFF}MTA} & ViT-B/32 & 53.6          & 76.3                & 26.4          & 28.8         & 39.1           & 11.3             & 36.5              & 8.2              &       35.0       \\ \hline
      \rowcolor[HTML]{FFFFFF} 
      \cellcolor[HTML]{FFFFFF}                               & ViT-B/16 & 60.2          & 82.0                & 34.7          & 32.8         & 43.2            & 8.5             & 44.6               & 13.2              &      39.9        \\
      \rowcolor[HTML]{EFEFEF} 
      \multirow{-2}{*}{\cellcolor[HTML]{FFFFFF}R-TPT}         & ViT-B/32 & 55.8          & 76.4                & 28.4          & 29.1         & 41.0            & 5.1             & 37.6               & 9.2              &      35.3        \\ \hline
      \rowcolor[HTML]{FFFFFF} 
      \cellcolor[HTML]{FFFFFF}                               & ViT-B/16 & \textcolor{Chestnut}{\textbf{70.5}}          & \textcolor{Chestnut}{\textbf{85.6}}               & \textcolor{Chestnut}{\textbf{39.2}}          & \textcolor{Chestnut}{\textbf{37.8}}         & \textcolor{Chestnut}{\textbf{51.7}}           & \textcolor{Chestnut}{\textbf{13.1}}             & \textcolor{Chestnut}{\textbf{52.6}}               & \textcolor{Chestnut}{\textbf{15.1}}              &     \textcolor{Chestnut}{\textbf{45.7}}         \\
      \rowcolor[HTML]{EFEFEF} 
      \multirow{-2}{*}{\cellcolor[HTML]{FFFFFF}\textbf{A-TPT (Ours)}}        & ViT-B/32 & \textcolor{RoyalBlue}{\textbf{66.4}}          & \textcolor{RoyalBlue}{\textbf{79.8}}                & \textcolor{RoyalBlue}{\textbf{31.8}}          & \textcolor{RoyalBlue}{\textbf{31.1}}         & \textcolor{RoyalBlue}{\textbf{46.9}}            & \textcolor{RoyalBlue}{\textbf{12.7}}             & \textcolor{RoyalBlue}{\textbf{43.2}}               & \textcolor{RoyalBlue}{\textbf{10.4}}              & \textcolor{RoyalBlue}{\textbf{40.3}}
      \\ \bottomrule 
    \end{tabular}
  }
\end{table*}

\renewcommand{\arraystretch}{1.1}
\begin{table*}[t]
  \caption{\small Clean accuracy of test-time adaptation methods on ImageNet and fine-grained classification tasks via pre-trained ViT-B/16 and ViT-B/32, with the best results shown in \textcolor{Chestnut}{\textbf{bold red}} (ViT-B/16) and \textcolor{RoyalBlue}{\textbf{bold blue}} (ViT-B/32).}  
  \label{Table 2}
  \vskip -0.1in
  \centering
  \resizebox{\textwidth}{!}{  
    \begin{tabular}{ccccccccccc}
      \toprule
      \rowcolor[HTML]{FFFFFF} 
      \multicolumn{2}{c}{\cellcolor[HTML]{FFFFFF}\textbf{Method}}       & \textbf{Pets} & \textbf{Caltech101} & \textbf{Cars} & \textbf{DTD} & \textbf{UCF101} & \textbf{EuroSAT} & \textbf{Flower102} & \textbf{Aircraft} & \textbf{Average} \\ \hline
      \rowcolor[HTML]{FFFFFF} 
      \cellcolor[HTML]{FFFFFF}                               & ViT-B/16 & 88.3          & 94.0                & 65.5          & 44.4         & 65.2            & 42.2             & 67.4               & 23.9              &      61.4        \\ 
      \rowcolor[HTML]{EFEFEF} 
      \multirow{-2}{*}{\cellcolor[HTML]{FFFFFF}CLIP}         & ViT-B/32 & 85.1          & 91.4                & 60.1          & 43.0         & 61.6            & 35.8             & 64.0               & 18.1              &       57.4       \\ \hline
      \rowcolor[HTML]{FFFFFF} 
      \cellcolor[HTML]{FFFFFF}                               & ViT-B/16 & 82.3          & 87.6               & 55.0          & 41.0         & 65.8            & \textcolor{Chestnut}{\textbf{47.4}}             & 69.0               & 23.3             &        58.9      \\
      \rowcolor[HTML]{EFEFEF} 
      \multirow{-2}{*}{\cellcolor[HTML]{FFFFFF}TTC}          & ViT-B/32 & 83.5          & 86.5                & 48.1          & 37.3         & 62.6            & \textcolor{RoyalBlue}{\textbf{53.0}}            & 64.3               & 18.2              &         56.7     \\ \hline
      \rowcolor[HTML]{FFFFFF} 
      \cellcolor[HTML]{FFFFFF}                               & ViT-B/16 & 86.2          & 91.9                &65.7          & 43.2         & 63.0            & 28.2             &65.9               & 23.4              &    58.4          \\
      \rowcolor[HTML]{EFEFEF} 
      \multirow{-2}{*}{\cellcolor[HTML]{FFFFFF}TPT-Ensemble}          & ViT-B/32 & 75.0          & 88.2                & 51.7          & 39.8         & 54.9            & 30.8             & 58.1              & 16.4              &     51.9         \\ \hline
      \rowcolor[HTML]{FFFFFF} 
      \cellcolor[HTML]{FFFFFF}                               & ViT-B/16 & 88.0          & 94.3                & 67.7        & 46.5         & 67.5            & 42.5             & 67.4               & \textcolor{Chestnut}{\textbf{25.0}}             &      62.4        \\
      \rowcolor[HTML]{EFEFEF} 
      \multirow{-2}{*}{\cellcolor[HTML]{FFFFFF}MTA} & ViT-B/32 & 86.3          & 92.0                & 63.4          & 43.8      & 63.3            & 34.6             & 64.4              & \textcolor{RoyalBlue}{\textbf{20.2}}              &      58.5       \\ \hline
      \rowcolor[HTML]{FFFFFF} 
      \cellcolor[HTML]{FFFFFF}                               & ViT-B/16 & 87.2          & 93.7                & 67.0          & 46.4         & 67.2            & 34.7             & 68.7               & 23.9              &      61.1        \\
      \rowcolor[HTML]{EFEFEF} 
      \multirow{-2}{*}{\cellcolor[HTML]{FFFFFF}R-TPT}         & ViT-B/32 & 84.5          & 90.6                & 63.1          & 42.1         & 62.8            & 32.0             & 62.6               & 19.1              &      57.1        \\ \hline
      \rowcolor[HTML]{FFFFFF} 
      \cellcolor[HTML]{FFFFFF}                               & ViT-B/16 & \textcolor{Chestnut}{\textbf{89.5}}          & \textcolor{Chestnut}{\textbf{94.6}}               & \textcolor{Chestnut}{\textbf{67.9}}          & \textcolor{Chestnut}{\textbf{48.0}}         & \textcolor{Chestnut}{\textbf{68.9}}            & 42.2             & \textcolor{Chestnut}{\textbf{70.1}}               & 22.5              &\textcolor{Chestnut}{\textbf{63.0}}           \\
      \rowcolor[HTML]{EFEFEF} 
      \multirow{-2}{*}{\cellcolor[HTML]{FFFFFF}\textbf{A-TPT (Ours)}}        & ViT-B/32 & \textcolor{RoyalBlue}{\textbf{87.3}}          & \textcolor{RoyalBlue}{\textbf{92.0}}                & \textcolor{RoyalBlue}{\textbf{63.8}}          & \textcolor{RoyalBlue}{\textbf{44.0}}         & \textcolor{RoyalBlue}{\textbf{63.4}}            & 35.9             & \textcolor{RoyalBlue}{\textbf{64.5}}               & 17.5              & \textcolor{RoyalBlue}{\textbf{58.6}}             \\ \bottomrule 
    \end{tabular}
    }
    \vskip -0.1in
\end{table*}
\vspace{-2pt}
\section{Experiments}
To comprehensively evaluate the effectiveness of A-TPT, our experimental results are organized into three sections. In Sec.~\ref{Section 4.2.}, we benchmark A-TPT against advanced test-time defense methods on both clean and adversarial data. Sec.~\ref{Section 4.3.} presents extended experiments, comparing A-TPT with training-time adaptation methods and evaluating its robustness under more attacks. Finally, Sec.~\ref{Section 4.4.} conducts ablation studies to justify the motivation of our framework and quantify the contribution of each component.

\subsection{Setups}
\paragraph{Datasets} Previous works \citep{wang2024pre,xing2025clip} mainly focus on adversarial generalization on ImageNet-1K \citep{deng2009imagenet} and its variants, but often ignore fine-grained scenarios. In contrast, our method effectively leverages discriminative semantic parts of the fine-grained targets while improving robustness. To evaluate the effectiveness, especially for fine-grained tasks, we cover diverse classification tasks including Caltech101 \citep{fei2004learning}, OxfordPets \citep{parkhi2012cats}, Flower102 \citep{nilsback2008automated}, StanfordCars \citep{krause20133d}, FGVC-Aircraft \citep{maji2013fine}, DTD \citep{cimpoi2014describing}, EuroSAT \citep{helber2019eurosat}, and UCF101 \citep{soomro2012dataset}. We also conduct experiments on ImageNet to identify general ability. In addition, we also conduct experiments on ImageNet-out-of-distribution (OOD) datasets, including ImageNet-A \citep{hendrycks2021natural}, ImageNet-V2 \citep{recht2019imagenet}, ImageNet-R \citep{hendrycks2021many}, ImageNet-S \citep{wang2019learning}

\paragraph{Baselines} We compare A-TPT against two test-time prompt tuning methods for CLIP: TPT-Ensemble \citep{shu2022test} and the SOTA method R-TPT \citep{sheng2025r}, as well as a strong method based on the test-time distribution alignment for CLIP: MTA \citep{zanella2024test}. An advanced detection-then-defense method, TTC \citep{xing2025clip}, is also included. Here, TPT-Ensemble employs simple averaging of predictions across selected low-entropy views. All baseline methods rely on pretrained CLIP and AugMix \citep{hendrycks2020augmix} augmentation, without any other training knowledge or additional models.

\paragraph{Evaluation Metrics} Following previous works \citep{sheng2025r,zanella2024test}, we report two average accuracies ($\%$): clean accuracy is used to measure the method's adaptation ability on clean samples, while adversarial accuracy measures the method's robustness on adversarial samples. 

\paragraph{Implementation Details} We use the official pretrained CLIP models (ViT-B/16, ViT-B/32 and ResNet50) as the backbone throughout our experiments. PGD \citep{madry2018towards} is used to generate adversarial samples. Specifically, we use a perturbation bound of $\varepsilon=4/255$ with 100 steps for ViT, and $\varepsilon=1/255$ with 1 step for ResNet. In the prompt tuning stage at test time, we utilize the Adam optimizer with weight decay and one step with a learning rate of 0.005. All experiments are conducted using the PyTorch framework on RTX-4090 GPUs with eight-card data parallelism. More setup details are given in Appendix~\ref{Appendix C.}.

\begin{table*}[!t]
\caption{Clean and adversarial (Adv.) accuracy of test-time adaptation methods on ImageNet-OOD datasets (ResNet50).}
\label{tab:imagenet_ood}
  \vskip -5.0pt
\centering
\begingroup
\footnotesize
\renewcommand{\arraystretch}{1.18}
\setlength{\tabcolsep}{4pt}
\settowidth{\oodcolwidth}{\textbf{100.0}}
\def\dcell#1{\makebox[\oodcolwidth][c]{#1}}
\def\robcell#1{\cellcolor[HTML]{EFEFEF}\makebox[\oodcolwidth][c]{#1}}
\def\groupsep{\hspace{4pt}\vrule width 0.4pt\hspace{4pt}}
\begin{tabular}{c@{\groupsep}cc@{\groupsep}cc@{\groupsep}cc@{\groupsep}cc@{\groupsep}cc@{\groupsep}cc}
\toprule[0.9pt]
\textbf{Method} 
& \multicolumn{2}{c@{\groupsep}}{\textbf{ImageNet}} 
& \multicolumn{2}{c@{\groupsep}}{\textbf{ImageNet-A}} 
& \multicolumn{2}{c@{\groupsep}}{\textbf{ImageNet-V2}} 
& \multicolumn{2}{c@{\groupsep}}{\textbf{ImageNet-R}} 
& \multicolumn{2}{c@{\groupsep}}{\textbf{ImageNet-S}} 
& \multicolumn{2}{c}{\textbf{Average}} \\
& \dcell{\textbf{Clean}} & \dcell{\textbf{Adv.}} 
& \dcell{\textbf{Clean}} & \dcell{\textbf{Adv.}} 
& \dcell{\textbf{Clean}} & \dcell{\textbf{Adv.}} 
& \dcell{\textbf{Clean}} & \dcell{\textbf{Adv.}} 
& \dcell{\textbf{Clean}} & \dcell{\textbf{Adv.}} 
& \dcell{\textbf{Clean}} & \dcell{\textbf{Adv.}} \\
\midrule[0.5pt]

CLIP
& \dcell{58.2} & \robcell{0.1} 
& \dcell{21.8} & \robcell{0.0} 
& \dcell{51.5} & \robcell{0.1} 
& \dcell{56.1} & \robcell{0.8} 
& \dcell{33.3} & \robcell{0.5} 
& \dcell{44.2} & \robcell{0.3}  \\

TPT-Ensemble
& \dcell{58.0} & \robcell{40.1} 
& \dcell{22.6} & \robcell{10.1} 
& \dcell{52.0} & \robcell{37.2} 
& \dcell{51.3} & \robcell{39.3} 
& \dcell{29.5} & \robcell{20.7} 
& \dcell{42.7} & \robcell{29.5}\\

MTA
& \dcell{60.4} & \robcell{30.0} 
& \dcell{27.5} & \robcell{5.6} 
& \dcell{54.2} & \robcell{24.6} 
& \dcell{\textbf{58.4}} & \robcell{29.8} 
& \dcell{\textbf{35.2}} & \robcell{11.3} 
& \dcell{47.1} & \robcell{20.3} \\

R-TPT
& \dcell{60.9} & \robcell{47.7} 
& \dcell{28.4} & \robcell{14.4} 
& \dcell{54.9} & \robcell{41.6} 
& \dcell{57.6} & \robcell{46.9} 
& \dcell{34.0} & \robcell{26.2} 
& \dcell{47.1} & \robcell{35.4}\\

\textbf{A-TPT (Ours)}
& \dcell{\textbf{61.4}} & \robcell{\textbf{48.2}} 
& \dcell{\textbf{31.6}} & \robcell{\textbf{15.3}} 
& \dcell{\textbf{55.1}} & \robcell{\textbf{42.1}} 
& \dcell{57.6} & \robcell{\textbf{47.1}} 
& \dcell{34.4} & \robcell{\textbf{26.4}} 
& \dcell{\textbf{48.0}} & \robcell{\textbf{35.8}}\\
\bottomrule[0.9pt]
\end{tabular}
\endgroup
\end{table*}

\subsection{Main Results} \label{Section 4.2.}
\paragraph{Results of Adversarial Robustness} As shown in \cref{Table 1}, we evaluate the adversarial robustness of A-TPT on eight fine-grained tasks and the ImageNet task. A-TPT achieves the best adversarial accuracy of $45.7\%$ (ViT-B/16) and $40.3\%$ (ViT-B/32), significantly outperforming existing test-time defense methods. This robustness highlights the broad applicability of A-TPT and its potential to serve as an effective defense paradigm for future vision-language models.

MTA aims to obtain consistent representations across augmented views through aggregation optimization. 
RTPT introduces pointwise entropy of augmented views to optimize textual prompts.
However, these methods overlook the semantic information. 
A-TPT can identify unperturbed discriminative parts and achieves semantics-preserving diversity, ultimately surpassing MTA by $5.3\%$ (ViT-B/32) and R-TPT by $5.0\%$ (ViT-B/32) in adversarial accuracy. 
Semantic preservation does matter, and what matters more is identifying discriminative semantic parts. 
\begin{figure*}[t]
  \centering
  \includegraphics[width=0.8\textwidth]{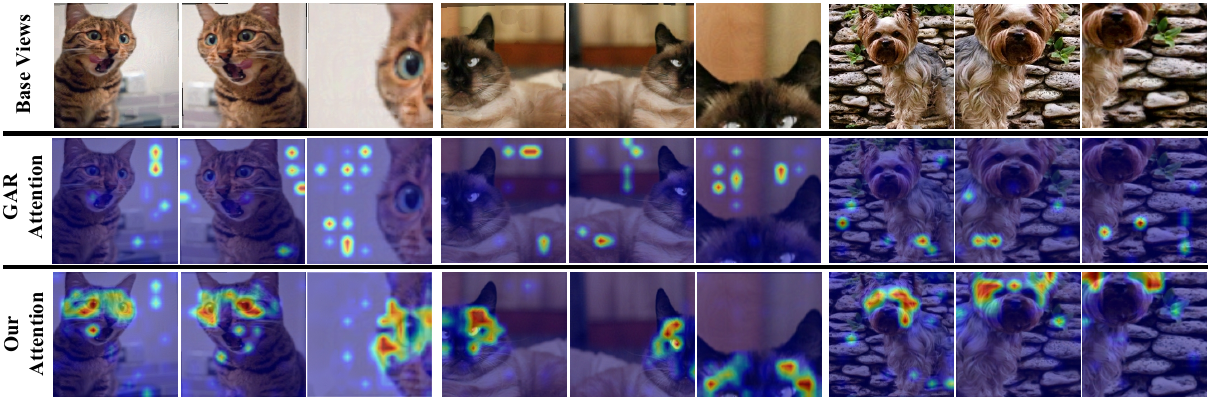}
  \caption{Quality of semantic identification on adversarial examples (Pets dataset). Compared with GAR, our refined attention focuses on continuous and discriminative semantic parts (ViT-B/16).}
  \label{fig:fig3}
  \vskip -0.1in
\end{figure*}

\paragraph{Results of Clean Accuracy} Besides adversarial robustness, A-TPT also improves the clean accuracy of CLIP from $61.4\%$ to $63.0\%$ (ViT-B/16) and from $57.4\%$ to $58.6\%$ (ViT-B/32) as shown in \cref{Table 2}. Compared to previous methods, we achieve the best performance and surpass the SOTA defense method RTPT by $1.5\%$ via ViT-B/32 in the clean distribution. These methods largely preserve CLIP's zero-shot ability, indicating that extracting consistent information from multiple views is a reliable approach. A-TPT can further leverage discriminative semantics to guide this process, whether or not the features are corrupted.

\paragraph{Results on ImageNet OOD Datasets} Beyond the fine-grained scenarios, we further report the experimental results on five generic benchmarks in Table~\ref{tab:imagenet_ood}.  Although CLIP can handle natural distribution shifts, it is highly vulnerable to adversarial perturbations. A-TPT achieves the strongest adversarial robustness across both ImageNet and its out-of-distribution variants. Notably, A-TPT reaches an average adversarial robustness of $35.8\%$, while CLIP achieves only $0.3\%$. In addition, A-TPT maintains clean accuracy comparable to baselines, suggesting our effectiveness on large-scale datasets with distribution shifts.

\subsection{More Experiments} \label{Section 4.3.}
\paragraph{Results under Different Adversarial Attacks} We evaluate A-TPT with two SOTA methods, MTA and R-TPT, under various attack methods. We employ CW \citep{carlini2017towards}, DeepFool (DF) \citep{moosavi2016deepfool}, and FGSM \citep{goodfellow2015explaining} as new attacks and report the adversarial accuracy on two fine-grained datasets in \cref{table 3}. A-TPT achieves the highest adversarial accuracy across all types of attacks. Obviously, the attention-based semantics-preserving strategy of A-TPT is stable and substantial, confirming the simple but effective direction. 
\vspace{-3pt}
\paragraph{Results Compared with Training-Time Methods} Training-time defense methods typically require labeled data and compromise performance on clean data. 
To further examine the effectiveness of A-TPT, we compared A-TPT with training-time adaptation methods PAFT \citep{chen2020adversarial}, TeCoA \citep{mao2023understanding}, APT \citep{li2024one}, and FARE \citep{schlarmann2024robust} on fine-grained datasets, as shown in \cref{Table 4} (\cref{Table 7} and \cref{Table 8} in Appendix~\ref{Appendix C.}). 
It is shown that A-TPT not only maintains the best performance on clean data but also achieves the best robustness on adversarial data. It confirms that the discriminative regions preserved by A-TPT's token-gradient attention cover most of the discriminative parts.

\begin{table}[t]
  \caption{Adversarial accuracies under various attacks on two fine-grained datasets (ViT-B/32).}
  \vskip -0.1 in
  \label{table 3}
  \begin{center}
    \resizebox{\columnwidth}{!}{  
      \begin{small}
        \begin{tabular}{ccccccccc}
          \toprule
          \multirow{2}{*}{Method} & \multicolumn{4}{c}{Flower} & \multicolumn{4}{c}{DTD}   \\
                                & CW    & DF   & FGSM & Avg. & CW   & DF   & FGSM & Avg. \\
          \midrule
          MTA                     & 34.5  & 35.4 & 36.6 & 35.5 & 23.6 & 23.5 & 23.9 & 23.7 \\
          R-TPT                   & 51.6  & 54.7 & 49.2 & 51.8 & 34.2 & 35.9 & 32.5 & 34.2 \\
          \textbf{A-TPT (Ours) }                 & \textcolor{RoyalBlue}{\textbf{54.7}}  & \textcolor{RoyalBlue}{\textbf{58.2}} & \textcolor{RoyalBlue}{\textbf{53.4}} & \textcolor{RoyalBlue}{\textbf{55.4}} & \textcolor{RoyalBlue}{\textbf{39.2}} & \textcolor{RoyalBlue}{\textbf{41.6}} & \textcolor{RoyalBlue}{\textbf{38.8}} & \textcolor{RoyalBlue}{\textbf{39.9}} \\
          \bottomrule
        \end{tabular}
      \end{small}
    }
  \end{center}
  \vspace{-5pt}
\end{table}
\begin{table}[t]
  \caption{The average accuracy across 8 fine-grained datasets compared A-TPT with training-time defense methods (ViT-B/32).}
    \vskip -0.1in
  \label{Table 4}
  \begin{center}
    \begin{small}
        \begin{tabular}{ccccr}
          \toprule
          Methods  & Stage         & Clean Acc     & Adv Acc  \\
          \midrule
          PAFT    & Training time & 54.9 & 27.5  \\ 
          TeCoA & Training time & 33.4 & 32.2 \\
          APT+TeCoA    & Training time & 40.2 & 38.9  \\
          FARE    & Training time & 42.2 &   40.3       \\
         \textbf{ A-TPT (Ours) }    & Test time & \textcolor{RoyalBlue}{\textbf{58.6}} & \textcolor{RoyalBlue}{\textbf{40.3}} \\ 
          \bottomrule
        \end{tabular}
    \end{small}
  \end{center}
  \vspace{-10pt}
\end{table}

\paragraph{Inference Efficiency} We further study the inference efficiency, finding that the additional computational cost stems from per-sample attention extraction, as shown in Table~\ref{Table time}. Moreover, the inference can be accelerated by reducing the number of views to obtain better real-time performance. By reducing this number from 64 to 16, the time required for per-sample is reduced to less than 1/5 of the original, but the accuracy remains stable.
\begin{table}[t]
  \caption{Inference efficiency on the Pets dataset (ViT-B/16). }
    \vskip -0.1in
  \label{Table time}
  \begin{center}
    \begin{small}
        \begin{tabular}{ccc}
          \toprule
          Views Number  & Time/per-sample    & Adv. Accuracy\\
          \midrule
          64 & 2.71 s & 70.5 \\ 
          32 & 1.07 s & 70.1 \\
          16 & 0.53 s & 69.8 \\
          \bottomrule
        \end{tabular}
    \end{small}
  \end{center}
  \vspace{-10pt}
\end{table}

\subsection{Ablation Studies} \label{Section 4.4.}
\paragraph{Quality of Semantic Identification} The key to semantic preservation lies in whether a model can still accurately identify the discriminative parts under adversarial attacks. 
Figure~\ref{fig:fig3} presents the base views and corresponding attention maps under adversarial attacks.
GAR fails to locate the critical regions due to the affected term $\nabla_{\mathbf A^{(b)}(x)}S(x)$ in Eq.~\eqref{eq:4}.
In contrast, our refined attention can focus on the same part across different views generated from the same test sample, indicating that unperturbed semantic information is effectively captured under adversarial attacks.

\paragraph{Reliability of TV-Based Ensemble} To further demonstrate the reliability of TV-based ensemble, we visualize attention distribution across views assigned different weights, as shown in Figure~\ref{fig:fig4}. 
It can be seen that during the ensemble stage of A-TPT, the attention of highly reliable ($w_i >0.1$) views is more evenly concentrated on discriminative parts.  
In contrast, the attention of low-reliability ($w_i <0.01$) views is more dispersed, and these views often correspond to semantically irrelevant backgrounds or perturbed features. 
Obviously, TV-based ensemble effectively distinguishes between semantically meaningful and semantically empty views, assigning them different weights. 
 
\begin{figure*}[t]
  \centering
  \includegraphics[width=0.8\textwidth]{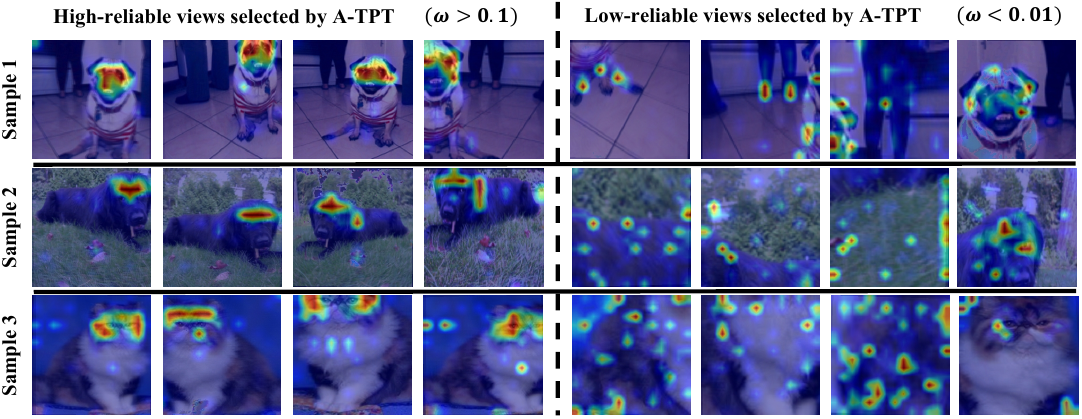}
  \caption{Attention distribution of high-reliable views and low-reliable views from the same test sample on the Pets dataset (ViT-B/16).}
  \label{fig:fig4}
  \vskip -0.1in
\end{figure*}
\begin{figure}[t]
  \centering
  \includegraphics[width=0.47\textwidth]{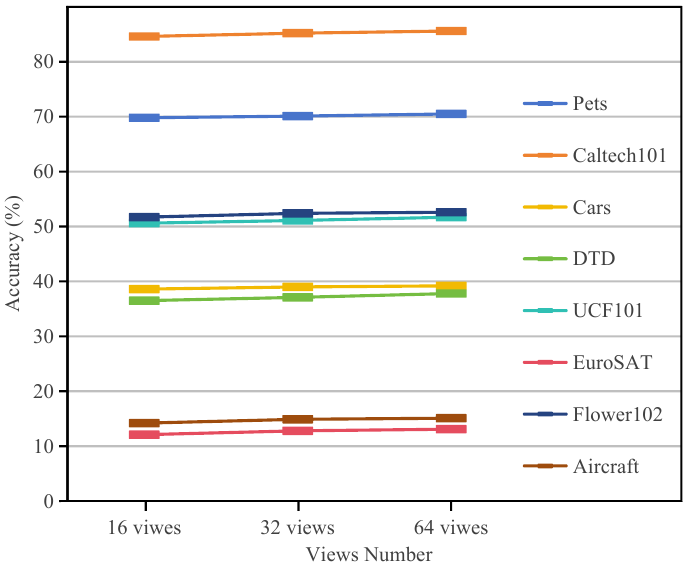}
  \caption{Adversarial accuracy with the various numbers of augmented views (ViT-B/16).}
  \label{fig:fig5}
  \vspace{-10pt}
\end{figure}
\paragraph{Sensitivity of the Numbers of View} We study the impact of the number of augmented views to further demonstrate the effectiveness of A-TPT. We provide the results of A-TPT with different view numbers (16, 32, 64) in Figure~\ref{fig:fig5}. The results show that A-TPT remains stable across different view numbers, indicating that with only a few views, A-TPT can accurately capture semantic information and effectively guide multi-view augmentation. 
 
\vspace{-5pt}
\paragraph{Contribution of Core Components} We evaluate the contribution of each component included in A-TPT (A-Refine = Attention Refinement, A-Aug = Attention-Guided Multi-View Augmentation, and A-TV = TV-Based Ensemble), as shown in Table~\ref{table 5}. The baseline is TPT-Ensemble, which conducts average ensemble after augmentation and prompt tuning. A-Refine is the foundation of A-TPT. Comparing the results in the second and third rows, it is evident that without A-Refine, A-Aug fails to protect discriminative regions and A-TV fails to ensemble reliable views. A-Aug and A-TV can effectively improve performance on both clean and adversarial data. Based on A-Refine, A-TV improves adversarial accuracy and clean accuracy by $6.3\%$ and $4.0\%$, respectively, while A-Aug improves by $9.8\%$ and $4.3\%$. This result further supports the conclusions in Sec.~\ref{Section 4.3.} regarding the Reliability of TV-Based Ensemble and Quality of Semantic Identification. Furthermore, applying A-Aug for discriminative parts protection, followed by A-TV to filter out views without useful semantic information, improves the accuracy to $45.7\%$ and $63.0\%$, respectively. In summary, A-Aug and A-TV can fully leverage semantic information to guide prompt tuning when discriminative parts are located by A-Refine.
\begin{table}[t]
  \caption{Ablation studies: adversarial and clean accuracy on eight fine-grained datasets (ViT-B/16).}
  \vskip -0.15in
  \label{table 5}
  \begin{center}
    \resizebox{\columnwidth}{!}{  
      \begin{small}
        \begin{tabular}{ccccc}
          \toprule
          A-Refine & A-Aug & A-TV & Adversarial Acc & Clean Acc \\
          \midrule
          $\times$             & $\times$     & $\times$        & 31.8                  & 58.4           \\
          $\checkmark$    & $\times$     & $\times$        & 31.8                 &58.4           \\
          $\times$    & $\checkmark$     & $\checkmark$        & 32.3                 &59.1           \\ 
          \midrule
          $\checkmark$    & $\times$     & $\checkmark$        & 38.1                 & 62.4           \\
          $\checkmark$    & $\checkmark$     & $\times$        & 41.6                 & 62.7           \\
          $\checkmark$    & $\checkmark$     & $\checkmark$        & 45.7                 & 63.0           \\
          \bottomrule
        \end{tabular}
    \end{small}
    }
  \end{center}
\vspace{-10pt}
\end{table}

\section{Conclusions and limitations}
In this paper, we explored semantics-preserving adversarial defense for VLMs at test time and proposed Attention-Guided Test-Time Prompt Tuning (A-TPT). Inspired by feature corruption, we first decoupled semantic identification from the training stage and leveraged the unperturbed semantic information under adversarial attacks. We found that existing gradient attention is sensitive to adversarial attacks and can introduce random attention distribution; thus, we replaced its gradient signal with a robust term to refine the attention strategy. This refined attention can guide the model to tune prompts across multiple views in a semantics-preserving way and ensemble views without corrupted features or useless backgrounds. Extensive results demonstrated that A-TPT achieves outstanding defense performance against various adversarial attacks and maintains adaptation performance on clean data. Our proposed method is built upon the views generated by multi-view transformations that preserve stable semantic structures. Consequently, A-TPT relies on multi-view augmentation, as well as view selection and ensemble, and is not entirely immune to adversarial attacks. In future work, we will further explore discriminative semantic information in the latent space.
\section*{Acknowledgments}
This work was supported by National Natural Science Foundation of China under Grant (62522602, 62272231), Basic Research Program of Jiangsu under Grant (BK20250073), the Fundamental Research Funds for the Central Universities (4009002401, 2242025K30024), and the Big Data Computing Center of Southeast University.
\section*{Impact Statement}
This paper presents work whose goal is to advance the field of Machine Learning. There are many potential societal consequences of our work, none which we feel must be specifically highlighted here.

\bibliography{example_paper}
\bibliographystyle{icml2026}

\newpage
\appendix
\onecolumn
\section{Discussion}\label{Appendix A.}
\textbf{A.1 Why attention-guided augmentation is necessary in test-time prompt tuning?} 

Existing test-time prompt tuning methods \citep{wang2025tapt,sheng2025r,shu2022test,yoon2024c} use multi-view augmentation to rewrite the optimization objective as an unsupervised expectation. They minimize entropy/uncertainty or enforce cross-view consistency, yielding more stable gradients that are less prone to bias from distribution shift or adversarial perturbations.

The goal of attention-guided augmentation is to generate multiple views that preserve discriminative regions under adversarial perturbations. More specifically:
\begin{itemize}
\item Random augmentations can easily introduce spurious background or texture noises, which will corrupt fine-grained discriminative parts.
\item Attention, as a semantic anchor, can guide augmentation to preserve high-semantic regions while still introduce diversity. Preserving target parts or discriminative regions can effectively  reduce the number of semantic-empty views.
\item Attention-guided augmentation improves the stability of optimization by less low-quality views, and the effectiveness of ensemble by more high-quality views.
\end{itemize}

\textbf{A.2 Why is the semantic anchor attention instead of feature representations?} \label{Appendix A.2}

The key issue is not simply “ Where semantic information are? ”, but “ Is there still unperturbed and discriminative semantics exist? ". Although semantics-preserving augmentation has been widely explored \citep{wang2021regularizing,hu2024asymmetric,yang2025adaaugment}, existing problems in test-time adversarial satiation are:
\begin{itemize}
  \item Adversarial attacks will push the features away from the decision boundary, which break the structure of the similarity or semantic neighborhood in feature space. In such case, using attacked feature representations to identify discriminative semantics is adding fuel to the fire.
  \item Attention is a training-independent signal. Directly learning semantics in feature space requires pre-learned prototypes, clustering, or dictionaries. As the regularization terms, it is difficult to decouple them from the training stage.
\end{itemize}

\textbf{A.3 Why does A-TPT need TV-based ensemble?}

Not all views retain discriminative semantics under adversarial attacks. Since multiple views are generated from adversarially perturbed samples, some of them inherit aggressive adversarial noise, causing target parts to be broken or mixed with the background. Therefore, we need a way to filter out useless views. When a view has no clear discriminative regions, its attention map exhibits larger total variation. So the TV-based ensemble serves as the last line of defense.

\section{\textbf{Theoretical justification}}\label{Appendix B.}
Given that feature space corruption occurs under adversarial perturbations, we turn to attention for semantic identification. 
However, existing target-specific attention methods \citep{chefer2021transformer,gupta2022vitol} remain vulnerable to adversarial attacks, primarily due to their inherent sensitivity to adversarial perturbations. 
We provide a theoretical proof using the PGD \citep{madry2018towards} attack algorithm as an example to demonstrate how adversarial perturbations exploit this sensitivity.\

\subsection{Second-Order Sensitivity of Attention Gradients}
\label{sec:4-b}
\begin{lemma}
The variation of $\phi_u(x)$ with respect to the input $x$ is governed by mixed second-order derivatives of $S(x)$.
\end{lemma}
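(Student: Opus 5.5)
The plan is to make $\phi_u(x)$ concrete as one edge of the GAR weighting term in Eq.~\eqref{eq:4}. Concretely, $\phi_u(x)=\big[\nabla_{\mathbf A^{(b)}(x)}S(x)\odot\mathbf A^{(b)}(x)\big]_u=g_u(x)\,a_u(x)$, where $u$ indexes an attention edge (head, query token, key token), $a_u=[\mathbf A^{(b)}(x)]_u$ and $g_u=\partial S/\partial a_u$. Then I differentiate $\phi_u$ with respect to $x$ using the product rule and the chain rule.

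The key subtlety is how to treat $S$. I will view $S$ as a function of two arguments, $S(x)=\widetilde S(x,\mathbf A^{(b)}(x))$: the attention at block $b$ is an intermediate variable, and the rest of the network also depends on $x$ through the residual stream. Then $g_u(x)=\partial_{a_u}\widetilde S(x,\mathbf A^{(b)}(x))$. The product rule gives
\begin{equation*}
\nabla_x\phi_u(x)=g_u(x)\,\nabla_x a_u(x)+a_u(x)\,\nabla_x g_u(x).
\end{equation*}
Applying the chain rule to $g_u$ gives
\begin{equation*}
\nabla_x g_u(x)=\frac{\partial^2\widetilde S}{\partial x\,\partial a_u}+\sum_{u'}\frac{\partial^2\widetilde S}{\partial a_{u'}\partial a_u}\,\nabla_x a_{u'}(x).
\end{equation*}
So the variation of $\phi_u$ is governed by two kinds of second-order quantities. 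The first is the mixed derivative $\partial^2 S/\partial x\,\partial a_u$. The second is the attention Hessian $\partial^2 S/\partial a_{u'}\partial a_u$ contracted against the Jacobian $\partial\mathbf A^{(b)}/\partial x$, which is itself a mixed second-order sensitivity of $S$ with respect to the input and the attention. The only first-order piece is $g_u\nabla_x a_u$. I will argue it is comparatively benign: $a_u$ is a softmax output, so it is bounded with a bounded Jacobian.

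Next I connect this to PGD. A PGD step $x\mapsto x+\alpha\,\mathrm{sign}(\nabla_x\mathcal L)$ with $\|\delta\|_\infty\le\varepsilon$ changes $\phi_u$ by a first-order Taylor term, $\Delta\phi_u\approx\langle\nabla_x\phi_u(x),\delta\rangle$. This increment is dominated by $a_u\langle\nabla_x g_u,\delta\rangle$, i.e.\ by the mixed second derivatives above, and it is bounded by $\varepsilon\|\nabla_x\phi_u\|_1$. PGD does not regularize second-order structure, so these terms can be large and vary sharply across edges $u$. That sets up the later claim about scattered peaks.

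The main obstacle is rigor in the decomposition $S=\widetilde S(x,\mathbf A^{(b)})$. The partial derivative with respect to $a_u$ must be defined with the other paths from $x$ held fixed, which is exactly what autograd computes for $\nabla_{\mathbf A}S$. I will state this convention explicitly and assume $S$ is $C^2$; GELU and softmax are smooth, and if ReLU appears I will work almost everywhere.
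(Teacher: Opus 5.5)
Your chain-rule computation is sound, but you applied it to a different object than the paper does, and the extra term this creates is not actually handled. The paper defines $\phi_u(x)$ as the raw gradient entry $\partial S(x)/\partial(\mathbf{A}^{(b)}(x))_u$, which is your $g_u$, not the Hadamard entry $g_u a_u$. The attention factor enters only afterwards, in the corollary, under the separate assumption that $(\mathbf{A}^{(b)}(x))_u$ is bounded away from zero.

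With your definition, the product rule produces the term $g_u\nabla_x a_u$. This is a first derivative of $S$ multiplied by the attention Jacobian, not a second-order derivative of $S$. Your reason for setting it aside does not hold up. Softmax has a bounded Jacobian with respect to its own logits, but $\nabla_x a_u$ is that Jacobian composed with the map from the input, through the preceding blocks, to the query--key scores. Nothing in your argument controls that map. Moreover, under $\|\delta\|_\infty\le\varepsilon$ both $g_u\langle\nabla_x a_u,\delta\rangle$ and $a_u\langle\nabla_x g_u,\delta\rangle$ are of order $\varepsilon$. There is no scale separation that would justify saying the second term dominates $\Delta\phi_u$. So, as written, the conclusion that the variation is governed by mixed second-order derivatives does not follow for your $\phi_u$.

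The repair is to take $\phi_u=g_u$. The lemma is then exactly your formula for $\nabla_x g_u$ combined with the first-order expansion $\phi_u(x+\delta)=\phi_u(x)+\langle\nabla_x\phi_u(x),\delta\rangle+O(\|\delta\|_\infty^2)$, which is the paper's argument. Under this definition the claim is nearly definitional: the input-derivative of a first derivative of $S$ is a second derivative.

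On this point your version is more careful than the paper's. The paper writes $\nabla_x\phi_u(x)=\frac{\partial^2 S(x)}{\partial(\mathbf{A}^{(b)}(x))_u\,\partial\mathbf{A}^{(b)}(x)}\cdot\nabla_x\mathbf{A}^{(b)}(x)$, keeping only the path through the block-$b$ attention. Your decomposition $S(x)=\widetilde S(x,\mathbf{A}^{(b)}(x))$ also retains the direct term $\partial^2\widetilde S/\partial x\,\partial a_u$ coming from the value and residual paths. That term is itself a mixed second derivative, so including it reinforces the claim.

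Your PGD and H\"older discussion belongs to the subsequent corollary rather than to this lemma.
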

\begin{proof}
Fix an entry index $u$ of the attention tensor $\mathbf{A}^{(b)}(x)$, and define:
\begin{equation}
\phi_u(x)\triangleq
\left[\nabla_{\mathbf{A}^{(b)}(x)}S(x)\right]_u
\triangleq
\frac{\partial S(x)}
{\partial(\mathbf{A}^{(b)}(x))_u}.
\end{equation}
Assuming local differentiability, for any small perturbation $\delta$, the first-order Taylor expansion yields:
\begin{equation}
\phi_u(x+\delta)
=
\phi_u(x)
+
\langle\nabla_x\phi_u(x),\delta\rangle
+
O(\|\delta\|_\infty^2),
\end{equation}
where $\nabla_x\phi_u(x)$ is a mixed second-order derivative of $S(x)$ with respect to the attention entry $(\mathbf{A}^{(b)}(x))_u$ and the input $x$:
\begin{equation}
\nabla_x\phi_u(x)
=
\frac{\partial^2S(x)}
{\partial(\mathbf{A}^{(b)}(x))_u\partial\mathbf{A}^{(b)}(x)}
\cdot
\nabla_x\mathbf{A}^{(b)}(x).
\end{equation}
Although the derivative in $\phi_u(x)$ is taken with respect to the attention entry $(\mathbf{A}^{(b)}(x))_u$, its value is evaluated on the computation graph induced by the input $x$. Hence, $\phi_u(x):\mathbb{R}^{d_x}\to\mathbb{R}$ is a well-defined function of $x$, where $d_x$ denotes the dimensionality of the input.
\end{proof}

\begin{corollary}
Under the constraint $\|\delta\|_\infty\leq\varepsilon$, there exists a perturbation $\delta^\star$ such that: 
\begin{equation}
|\phi_u(x+\delta^\star)-\phi_u(x)|
=
\varepsilon\|\nabla_x\phi_u(x)\|_1
+
O(\varepsilon^2).
\end{equation}
Therefore, if $\|\nabla_x\phi_u(x)\|_1$ is non-negligible, $\phi_u(x)$ can undergo a $\Theta(\varepsilon)$ change. If, in addition, $(\mathbf A^{(b)}(x))_u$ is bounded away from zero, then the corresponding entry of $\mathbf A^{(b)}(x)\odot\nabla_{\mathbf A^{(b)}(x)}S(x)$ can also undergo a $\Theta(\varepsilon)$ change.
\end{corollary}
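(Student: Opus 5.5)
The plan is to start from the first-order Taylor expansion given in the preceding Lemma,
\[
\phi_u(x+\delta)-\phi_u(x)=\langle\nabla_x\phi_u(x),\delta\rangle+O(\|\delta\|_\infty^2),
\]
and choose the perturbation that maximizes the linear term over the $\ell_\infty$ ball. By Hölder duality between $\ell_\infty$ and $\ell_1$, we have $\sup_{\|\delta\|_\infty\le\varepsilon}\langle g,\delta\rangle=\varepsilon\|g\|_1$. This supremum is attained at $\delta^\star=\varepsilon\,\mathrm{sign}(g)$ with $g=\nabla_x\phi_u(x)$, which is exactly the FGSM/PGD-style sign step. Coordinates where $g_k=0$ can be set to $0$ or any value in $[-\varepsilon,\varepsilon]$.

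Substituting $\delta^\star$ gives $\phi_u(x+\delta^\star)-\phi_u(x)=\varepsilon\|g\|_1+O(\varepsilon^2)$. Since $\|\delta^\star\|_\infty=\varepsilon$, the remainder is uniformly $O(\varepsilon^2)$. This requires local $C^2$ regularity of $S$ along the computation graph, which we assume as the Lemma does. Taking absolute values, $|\phi_u(x+\delta^\star)-\phi_u(x)|=\varepsilon\|g\|_1+O(\varepsilon^2)$. If $\|g\|_1\ge c>0$, then for small $\varepsilon$ the linear term dominates, and the change is $\Theta(\varepsilon)$. The upper bound also holds for every admissible $\delta$ by the same Hölder inequality.

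For the Hadamard product entry $\psi_u(x)=a_u(x)\phi_u(x)$, where $a_u=(\mathbf A^{(b)}(x))_u$, write
\[
\psi_u(x+\delta)-\psi_u(x)=a_u(x)\big(\phi_u(x+\delta)-\phi_u(x)\big)+\phi_u(x+\delta)\big(a_u(x+\delta)-a_u(x)\big).
\]
The first term has magnitude at least $a_{\min}\varepsilon\|g\|_1+O(\varepsilon^2)$ when $a_u\ge a_{\min}>0$. The second term is also $O(\varepsilon)$, since $a_u$ is Lipschitz in $x$, and this could cancel the first.

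This cancellation is the main obstacle. The first thing to try is to apply the Hölder argument directly to $\psi_u$. Compute $\nabla_x\psi_u=a_u\nabla_x\phi_u+\phi_u\nabla_x a_u$ and take $\delta^\star=\varepsilon\,\mathrm{sign}(\nabla_x\psi_u)$, which yields a change of $\varepsilon\|\nabla_x\psi_u\|_1+O(\varepsilon^2)$. It then remains to argue that $\|\nabla_x\psi_u\|_1$ is non-negligible. This follows from the reverse triangle inequality whenever $a_u\|\nabla_x\phi_u\|_1$ exceeds $|\phi_u|\,\|\nabla_x a_u\|_1$, for instance when the attention gradient is small in value but highly input-sensitive. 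Stating this as the generic, non-degenerate case gives the $\Theta(\varepsilon)$ conclusion.
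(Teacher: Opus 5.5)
Your argument for the displayed identity is the paper's proof. The paper likewise bounds $|\phi_u(x+\delta)-\phi_u(x)|$ by H\"older's inequality, notes that $\delta^\star=\varepsilon\,\mathrm{sign}(\nabla_x\phi_u(x))$ attains the first-order bound, and stops there.

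Where you depart from the paper is the Hadamard-product sentence. The paper gives no separate argument for it; it only asserts afterwards that the entry of $\mathbf A^{(b)}(x)\odot\nabla_{\mathbf A^{(b)}(x)}S(x)$ is ``consequently'' sensitive. Your cancellation concern about the term $\phi_u\nabla_x a_u$ is real. Rerunning the H\"older argument on $\psi_u=a_u\phi_u$ and requiring $\|\nabla_x\psi_u\|_1$ to be non-negligible is the right repair.

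That extra condition cannot be dropped. Take a toy dependence in which, near $x$, $S$ involves the entry $(\mathbf A^{(b)})_u$ only through an additive term $\log(\mathbf A^{(b)})_u$. Then $\phi_u=1/a_u$, so $\nabla_x\phi_u=-\nabla_x a_u/a_u^2$ is non-negligible whenever $\nabla_x a_u$ is. Yet $a_u\phi_u\equiv 1$ cannot be moved by any $\delta$, even with $a_u$ bounded away from zero. So you prove a correctly qualified statement, whereas the paper's claim that ``bounded away from zero'' suffices is not established by its proof.

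One small slip: $\phi_u$ is already a first derivative of $S$, so an $O(\varepsilon^2)$ remainder in its expansion needs $\phi_u\in C^{1,1}$. That is one more order of smoothness of $S$ than the $C^2$ you cite. With only $C^2$ you get $\varepsilon\|\nabla_x\phi_u\|_1+o(\varepsilon)$, which still suffices for the $\Theta(\varepsilon)$ conclusion.
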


\begin{proof}
By Hölder's inequality:
\begin{equation}
|\phi_u(x+\delta)-\phi_u(x)|
\leq
\varepsilon\|\nabla_x\phi_u(x)\|_1
+
O(\varepsilon^2),
\end{equation}
the first-order upper bound is attained by:
\begin{equation}
\delta^\star
=
\varepsilon\operatorname{sign}(\nabla_x\phi_u(x)).
\end{equation}
\end{proof}
Hence, the gradient component $\phi_u(x)$ can undergo an $O(\varepsilon)$ change under an $\ell_{\infty}$-bounded perturbation, driven by mixed second-order derivatives. Consequently, the corresponding entry of the gradient attention $\mathbf{A}^{(b)}(x)\odot\nabla_{\mathbf{A}^{(b)}(x)}S(x)$ is also sensitive to $\ell_{\infty}$ perturbations.

\subsection{First-Order Stability of Token-Based Weights}
\label{sec:4-c}
\begin{lemma}
The rectified token-level gradient contribution $\left[\left\langle
\mathbf{T}_v^{(b)}(x),
\mathbf{g}_v^{(b)}(x)
\right\rangle\right]_+$ has local stability under bounded $\|\delta\|_\infty\leq\varepsilon$ perturbations, and that this contribution implicitly aggregates local second-order information at the token level. 
\end{lemma}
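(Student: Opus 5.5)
Write $\mathbf{g}_v^{(b)}(x)\triangleq\nabla_{\mathbf{T}_v^{(b)}(x)}S(x)$ for the gradient at token $v$, and set
\[
\psi_v(x)\triangleq\bigl\langle \mathbf{T}_v^{(b)}(x),\mathbf{g}_v^{(b)}(x)\bigr\rangle ,
\]
so that the unnormalized entry of $\mathbf W^{(b)}(x)$ in Eq.~\eqref{eq:9} is $[\psi_v(x)]_+$. The plan mirrors the analysis of $\phi_u$ in the preceding lemma and corollary, so the two can be compared term by term. We assume $\mathbf T_v^{(b)}$ and $S$ are locally $C^2$ in $x$, as was done there.

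\textbf{Step 1: first-order expansion.} We Taylor-expand $\psi_v$ and apply the product rule:
\[
\nabla_x\psi_v(x)=\bigl(\nabla_x\mathbf T_v^{(b)}(x)\bigr)^{\!\top}\mathbf g_v^{(b)}(x)+\bigl(\nabla_x\mathbf g_v^{(b)}(x)\bigr)^{\!\top}\mathbf T_v^{(b)}(x).
\]
\begin{itemize}
\item The first term contains only first derivatives. Since $\mathbf g_v^{(b)}$ is the gradient of $S$ with respect to $\mathbf T_v^{(b)}$, the chain rule identifies this term with the portion of $\nabla_x S$ that passes through token $v$.
\item The second term is $\mathbf T_v^{(b)\top}\,\nabla^2_{\mathbf T_v^{(b)}\mathbf T^{(b)}}S\cdot\nabla_x\mathbf T^{(b)}$. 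It is a mixed second-order quantity, but it enters only after contraction with $\mathbf T_v^{(b)}$ along the full embedding dimension $d$.
\end{itemize}
This contraction is the precise sense of ``implicitly aggregates local second-order information at the token level.'' The Hessian block is seen only through the single direction $\mathbf T_v^{(b)}$, summed over $d$, instead of entry-wise as in $\phi_u$.

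\textbf{Step 2: stability bound.} By Hölder's inequality,
\[
|\psi_v(x+\delta)-\psi_v(x)|\le\varepsilon\,\|\nabla_x\psi_v(x)\|_1+O(\varepsilon^2).
\]
Because $[\cdot]_+$ is $1$-Lipschitz, the same bound holds for $[\psi_v]_+$. This gives local Lipschitz stability with constant
\[
L_v=\|\nabla_x\psi_v(x)\|_1,
\]
which in turn splits into a first-order part and a directional second-order part.

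We then pass to $\mathcal N$. If $\sum_{v'}[\psi_{v'}(x)]_+\ge c>0$, then $\ell_1$ normalization is Lipschitz with constant of order $2/c$. Hence
\[
\|\mathbf W^{(b)}(x+\delta)-\mathbf W^{(b)}(x)\|_1\le \frac{2\varepsilon}{c}\sum_v L_v+O(\varepsilon^2).
\]

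\textbf{Step 3 (main obstacle): making ``stability'' meaningful relative to the corollary.} A bound of order $\varepsilon L_v$ is still $O(\varepsilon)$, just like $\phi_u$. The comparative claim therefore has to come from the structure of the constants, and I would argue it in two parts.
\begin{itemize}
\item \emph{Column scaling.} In $\mathbf A^{(b)}\mathrm{diag}(\mathbf W^{(b)})$, a perturbation of one scalar $W_v$ multiplies an entire column uniformly. It therefore cannot create an isolated spike on a single edge $(i,v)$.
\item \emph{Per-entry versus averaged second-order terms.} In the Hadamard form, each of the $h s^2$ entries $\phi_u$ carries its own mixed Hessian row, and the adversary can align $\delta$ with any single row, as in the preceding corollary. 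In $\psi_v$ the second-order contribution is an average over the $d$ embedding coordinates weighted by $\mathbf T_v^{(b)}$. If the Hessian rows are not aligned with $\mathbf T_v^{(b)}$, cancellation makes this contraction smaller than the worst single entry. I would state this cancellation as an explicit mild assumption rather than try to prove it, since it cannot hold for arbitrary networks.
\end{itemize}

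Finally, we note that on the set where $\psi_v(x)>\varepsilon L_v$ or $\psi_v(x)<-\varepsilon L_v$ the sign of $\psi_v$ cannot flip. Consequently the support of $[\psi_v]_+$, and hence the top-$r$ mask in Eq.~\eqref{eq:6}, is locally invariant there. This is the stability notion the masks actually need.
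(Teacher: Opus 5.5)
Your Steps 1--2 are essentially the paper's proof, which writes $q_v^{(b)}$ for your $\psi_v$: both use the same product-rule split of $\nabla_x q_v^{(b)}$ to show the token-level contraction of second-order terms, and both get stability from H\"older's inequality plus the $1$-Lipschitz rectifier. The one technical difference is that the paper assumes a uniform bound $\sup_{x'\in\mathcal U_\varepsilon(x)}\|\nabla_x q_v^{(b)}(x')\|_1\le L_v$ and applies the mean value theorem to get the exact inequality $|a_v^{(b)}(x+\delta)-a_v^{(b)}(x)|\le L_v\varepsilon$, whereas your pointwise $\|\nabla_x\psi_v(x)\|_1$ comes with an $O(\varepsilon^2)$ remainder and so is not literally a Lipschitz constant (the exact version is also what your sign-flip remark in Step 3 needs). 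Step 3 otherwise goes beyond what the lemma asserts, and its last inference fails: sign invariance of $\psi_v$ says nothing about the ordering of the rolled-out CLS-to-patch map that Eq.~\eqref{eq:6} thresholds, so the top-$r$ mask can still change under $O(\varepsilon)$ perturbations near ties.
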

\begin{proof}
The weight $\mathbf{W}^{(b)}(x)$ defined in Eq.~\eqref{eq:9} is derived from token-level gradient contributions. For the $v$-th token in the $b$-th layer, define:
\begin{equation}
\mathbf{g}_v^{(b)}(x)
\triangleq
\nabla_{\mathbf{T}_v^{(b)}(x)}S(x),
\end{equation}
\begin{equation}
q_v^{(b)}(x)
\triangleq
\left\langle
\mathbf{T}_v^{(b)}(x),
\mathbf{g}_v^{(b)}(x)
\right\rangle .
\end{equation}
Let $\mathcal U_\varepsilon(x)=\{x':\|x'-x\|_\infty\leq\varepsilon\}$ donates the perturbation neighborhood. Assume that $q_v^{(b)}$ is differentiable on $\mathcal U_\varepsilon(x)$ and that there exists a constant $L_v>0$ satisfying:
\begin{equation}
\sup_{x'\in\mathcal U_\varepsilon(x)}
\left\|
\nabla_x q_v^{(b)}(x')
\right\|_1
\leq
L_v .
\end{equation}
Then, for any perturbation $\delta$ satisfying $\|\delta\|_\infty\leq\varepsilon$, define the rectified token-level gradient contribution as:
\begin{equation}
a_v^{(b)}(x)
=
\left[
q_v^{(b)}(x)
\right]_+
\end{equation}
It must satisfies the following inequality:
\begin{equation}
\left|
a_v^{(b)}(x+\delta)
-
a_v^{(b)}(x)
\right|
\leq
L_v\varepsilon .
\end{equation}
Since $q_v^{(b)}$ is differentiable on $\mathcal U_\varepsilon(x)$, the mean-value theorem gives the following expression for $\theta\in(0,1)$:
\begin{equation}
q_v^{(b)}(x+\delta)-q_v^{(b)}(x)
=
\left\langle
\nabla_x q_v^{(b)}(x+\theta\delta),
\delta
\right\rangle .
\end{equation}
By H\"older's inequality and $\|\delta\|_\infty\leq\varepsilon$, we obtain:
\begin{equation}
\left|
q_v^{(b)}(x+\delta)-q_v^{(b)}(x)
\right|\leq
\left\|
\nabla_x q_v^{(b)}(x+\theta\delta)
\right\|_1
\|\delta\|_\infty
\leq
L_v\varepsilon .
\end{equation}
Since the rectifier $[\cdot]_+$ is $1$-Lipschitz, we have:
\begin{equation}
\left|
a_v^{(b)}(x+\delta)
-
a_v^{(b)}(x)
\right|
\leq
\left|
q_v^{(b)}(x+\delta)
-
q_v^{(b)}(x)
\right|
\leq
L_v\varepsilon .
\end{equation}
Moreover, when $\mathbf{T}_v^{(b)}(x)$ and $\mathbf{g}_v^{(b)}(x)$ are differentiable with respect to $x$, the gradient of $q_v^{(b)}$ satisfies:
\begin{equation}
\nabla_x q_v^{(b)}(x)
=
\left(\nabla_x \mathbf{T}_v^{(b)}(x)\right)^\top
\mathbf{g}_v^{(b)}(x)
\\
+
\left(\nabla_x \mathbf{g}_v^{(b)}(x)\right)^\top
\mathbf{T}_v^{(b)}(x).
\end{equation}
Then, the term $\nabla_{\mathbf{T}_v^{(b)}(x)}S(x)$ contains local second-order information of $S(x)$, but this information is aggregated into a token-level contribution rather than assigned to individual attention edges.
\end{proof}

\begin{corollary}
Let the normalized token weight be defined as:
\begin{equation}
W_v^{(b)}(x)
=
\frac{a_v^{(b)}(x)}
{\sum_{k=1}^{s}a_k^{(b)}(x)}.
\end{equation}
Assume that, for all $x'\in\mathcal U_\varepsilon(x)$, the normalization denominator is bounded away from zero:
\begin{equation}
\sum_{k=1}^{s}a_k^{(b)}(x')
\geq
c_0
>
0 .
\end{equation}
Also assume that each $a_k^{(b)}$ satisfies the bound in Lemma B.3. with constant $L_k$, and define:
\begin{equation}
M_v
=
\sup_{x'\in\mathcal U_\varepsilon(x)}
a_v^{(b)}(x').
\end{equation}
Then, for any perturbation $\delta$ satisfying $\|\delta\|_\infty\leq\varepsilon$, the following inequality holds: 
\begin{equation}
\left|
W_v^{(b)}(x+\delta)
-
W_v^{(b)}(x)
\right|
\leq
C_v\varepsilon ,
\end{equation}
where ${C}_v$ is obtained as: 
\begin{equation}
C_v
=
\frac{L_v}{c_0}
+
\frac{M_v}{c_0^2}
\sum_{k=1}^{s}L_k .
\end{equation}
Therefore, the normalized token weight $W_v^{(b)}(x)$ remains Lipschitz-stable within the $\ell_\infty$ perturbation neighborhood.
\end{corollary}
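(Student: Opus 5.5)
The plan is to treat $W_v^{(b)}$ as a quotient $a_v/D$ with $D(x)=\sum_{k=1}^{s}a_k^{(b)}(x)$, and to bound its variation by the standard add-and-subtract decomposition. Write $x'=x+\delta$, $a_v'=a_v^{(b)}(x')$, $D'=D(x')$, and abbreviate the unperturbed quantities as $a_v, D$. Then
\begin{equation*}
W_v^{(b)}(x')-W_v^{(b)}(x)=\frac{a_v'}{D'}-\frac{a_v}{D}
=\frac{a_v'-a_v}{D}+a_v'\Big(\frac{1}{D'}-\frac{1}{D}\Big)
=\frac{a_v'-a_v}{D}+\frac{a_v'\,(D-D')}{D\,D'} .
\end{equation*}
We split the difference this way, rather than pulling out $a_v$, so that the factor multiplying the denominator difference is evaluated at the perturbed point. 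That point lies in $\mathcal U_\varepsilon(x)$, so $M_v$ bounds it directly.

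For the first term, the denominator assumption gives $D\ge c_0$ because $x\in\mathcal U_\varepsilon(x)$. Lemma B.3 gives $|a_v'-a_v|\le L_v\varepsilon$. Together these yield the contribution $L_v\varepsilon/c_0$.

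For the second term, we use $|a_v'|\le M_v$ (note $a_v'\ge 0$ since it is rectified) and $D D'\ge c_0^2$. The triangle inequality together with Lemma B.3 applied to each $k$ gives $|D-D'|\le\sum_k|a_k'-a_k|\le\varepsilon\sum_k L_k$. Combining these gives $M_v\varepsilon\sum_k L_k/c_0^2$. Adding the two contributions reproduces exactly $C_v\varepsilon$.

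There is no real obstacle here. The only care needed is that every sup and every lower bound is taken at points inside $\mathcal U_\varepsilon(x)$; both $x$ and $x+\delta$ qualify, so the hypotheses of Lemma B.3, which apply uniformly on $\mathcal U_\varepsilon(x)$, cover both terms. The Lipschitz-stability conclusion then follows immediately, since $C_v$ is independent of $\delta$.
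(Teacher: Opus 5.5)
Your argument is correct and is essentially the paper's proof: the same add-and-subtract decomposition of the quotient, with $|Z(x+\delta)-Z(x)|\le\varepsilon\sum_k L_k$ from Lemma B.3 and $Z(x)Z(x+\delta)\ge c_0^2$ (the paper writes $Z$ for your $D$). The only difference is cosmetic: the paper uses the mirror-image split, dividing $a_v^{(b)}(x+\delta)-a_v^{(b)}(x)$ by $Z(x+\delta)$ and multiplying $1/Z(x+\delta)-1/Z(x)$ by $a_v^{(b)}(x)$, which works just as well because $x$ itself lies in $\mathcal U_\varepsilon(x)$ and so $a_v^{(b)}(x)\le M_v$; your stated reason for preferring your split is therefore unnecessary but harmless.
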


\begin{proof}
Let define $Z(x)$ as:
\begin{equation}
Z(x)
=
\sum_{k=1}^{s}a_k^{(b)}(x).
\end{equation}
Then the normalized token weight can be written as:
\begin{equation}
W_v^{(b)}(x)
=
\frac{a_v^{(b)}(x)}{Z(x)}.
\end{equation}
For $\|\delta\|_\infty\leq\varepsilon$, we have:
\begin{equation}
\left|
Z(x+\delta)-Z(x)
\right|
\\
\leq
\sum_{k=1}^{s}
\left|
a_k^{(b)}(x+\delta)
-
a_k^{(b)}(x)
\right|
\leq
\varepsilon
\sum_{k=1}^{s}L_k .
\end{equation}
Using $Z(x)\geq c_0$ and $Z(x+\delta)\geq c_0$, we obtain:

\begin{equation}
\begin{aligned}
&\left|
W_v^{(b)}(x+\delta)
-
W_v^{(b)}(x)
\right|
=
\left|
\frac{a_v^{(b)}(x+\delta)}{Z(x+\delta)}
-
\frac{a_v^{(b)}(x)}{Z(x)}
\right|
\\
&\leq
\frac{
\left|
a_v^{(b)}(x+\delta)
-
a_v^{(b)}(x)
\right|
}
{Z(x+\delta)}
+
a_v^{(b)}(x)
\left|
\frac{1}{Z(x+\delta)}
-
\frac{1}{Z(x)}
\right|
\\
&\leq
\frac{L_v}{c_0}\varepsilon
+
\frac{M_v}{c_0^2}
\left|
Z(x+\delta)-Z(x)
\right|
\\
&\leq
\left(
\frac{L_v}{c_0}
+
\frac{M_v}{c_0^2}
\sum_{k=1}^{s}L_k
\right)
\varepsilon .
\end{aligned}
\end{equation}
This proves the first-order stability of the normalized token-based gradient weight.
\end{proof}
\begin{property}Under $\|\delta\|_\infty\leq\varepsilon$ perturbations, the first-order variation of the column-scaled attention $\mathbf{B}^{(b)}(x)$ is additively composed of the attention variation and the token-weight variation, while token-level weighting avoids directly using the more sensitive edge-wise gradient attention term.
\end{property}
\begin{proof}
Consider the column-scaling operation:
\begin{equation}
\mathbf{B}^{(b)}(x)
=
\mathbf{A}^{(b)}(x)
\operatorname{diag}\!\left(\mathbf{W}^{(b)}(x)\right).
\end{equation}
For a fixed attention head, its $(i,j)$-th entry is given by:
\begin{equation}
B_{ij}^{(b)}(x)
=
A_{ij}^{(b)}(x)W_j^{(b)}(x).
\end{equation}
Let the perturbation operator be defined as:
\begin{equation}
\Delta f
=
f(x+\delta)-f(x).
\end{equation}
Then the perturbation of $B_{ij}^{(b)}$ admits the exact decomposition:
\begin{equation}
\Delta B_{ij}^{(b)}
=
\left(\Delta A_{ij}^{(b)}\right)\left(\Delta W_j^{(b)}\right)
\\+
\left(\Delta A_{ij}^{(b)}\right)W_j^{(b)}(x)
+
A_{ij}^{(b)}(x)\left(\Delta W_j^{(b)}\right).
\end{equation}
If both $A_{ij}^{(b)}$ and $W_j^{(b)}$ vary at most linearly with $\varepsilon$ under $\|\delta\|_\infty\leq\varepsilon$, then the perturbation can be written as:
\begin{equation}
\Delta B_{ij}^{(b)}
=
\left(\Delta A_{ij}^{(b)}\right)W_j^{(b)}(x)
\\
+
A_{ij}^{(b)}(x)\left(\Delta W_j^{(b)}\right)
+
O(\varepsilon^2).
\end{equation}
Therefore, the first-order changes of $A_{ij}^{(b)}$ and $W_j^{(b)}$ enter additively. Unlike the edge-wise gradient attention term:
\begin{equation}
A_{ij}^{(b)}(x)
\frac{\partial S(x)}
{\partial A_{ij}^{(b)}(x)}.
\end{equation}
\end{proof}
As a result, token-level column scaling is a more aggregated and more stable attention reweighting strategy.

\section{\textbf{\textbf{Experimental Supplements}}}\label{Appendix C.}
\textbf{C.1. Setting Details}

The main parameters and settings of A-TPT are provided in Table \ref{Table 6}. Among them, the parameters of \textbf{Attention-guided Augmentation} are set as learnable on the validation set of ImageNet \citep{deng2009imagenet}. After being determined through hyperparameter search, they are fixed and used for inference on the test set. The settings in \textbf{Prompt Tuning} follow previous studies \citep{shu2022test,sheng2025r}, ensuring a unified and fair comparison. Notably, prompts are reset and tuned per samples after attacks, which is more consistent with real-world scenarios. 

In our ResNet50 experiment, we use the final attention-pooling layer as the only available attention source. The pooled global query in this layer serves as the counterpart of the global token, and its attention over spatial locations is used as $\textbf{A(x)}$. Moreover, the token-gradient weighting is retained only at the final attention-pooling layer, while the multi-layer rollout and last-2 averaging used in ViT are not applicable.

\begin{table}[htbp]
\centering
\caption{Parameters and setting details of A-TPT}
\vskip -0.1in
\label{Table 6}
\begin{tabularx}{\textwidth}{>{\centering\arraybackslash}X>{\centering\arraybackslash}X>{\centering\arraybackslash}X>{\centering\arraybackslash}X}
\toprule
\multicolumn{2}{c}{Attention-guided Augmentation} & \multicolumn{2}{c}{Prompt Tuning and Entropy optimization} \\
\cmidrule(lr){1-2} \cmidrule(lr){3-4}
Mixing Strength $(m_{high})$ & 0.8 & Learning Rate & 0.005 \\
Mixing Strength $(m_{low})$ & 0.2 & Learning Steps & 1 \\
Mask Ratio $(r)$ & 0.2 & Number of Prompts & 4 \\
Number of Views $N$ & 64 & Ratio of Selected Viwes & 0.1 \\
\bottomrule
\end{tabularx}
\end{table}

\textbf{C.2. Results Compared with Training-time Methods} 

 In the main text of Sec~\ref{Section 4.3.}, we compared A-TPT with training-time defense methods. Here, we provide a comprehensive evaluation of training-time methods on fine-grained datasets in Table \ref{Table 7} and Table \ref{Table 8} to highlight the efficiency of A-TPT, even in the absence of extra data. It is shown that A-TPT achieves the best performance both on clean and adversarial data.
 \begin{table}[htbp]
  \caption{Clean accuracy across eight fine-grained datasets compared A-TPT with training-time defense methods (ViT-B/32).}
    \vskip -0.15in
  \label{Table 7}
  \begin{center}
    \begin{small}
        \begin{tabular}{cccccccccc}
          \toprule
          Method  & Pets         & Caltech101     & Cars &DTD &UCF101 &EuroSAT &Flower102 &Aircraft &Average  \\
          \midrule
          CLIP & 85.1 &91.4 &60.1 &43.0 &61.6 &35.8 &64.0 &18.1 &57.4  \\ 
          TeCoA & 66.9 &79.3 &10.2 &24.5 &34.6 &14.5 &30.8 &6.6 &33.4 \\
          APT+TeCoA & 66.7 &81.4 &20.8 &35.2 &40.2 &29.3 &42.5 &5.2 &40.2  \\
          FARE    & 76.7 &86.3 &39.2 &28.3 &44.2 &16.6 &37.0 &9.5 &42.2       \\
          A-TPT     & \textcolor{RoyalBlue}{\textbf{87.3}} &\textcolor{RoyalBlue}{\textbf{92.0}}&\textcolor{RoyalBlue}{\textbf{63.8}} &\textcolor{RoyalBlue}{\textbf{44.0}}&\textcolor{RoyalBlue}{\textbf{63.4}} &\textcolor{RoyalBlue}{\textbf{35.9}}&\textcolor{RoyalBlue}{\textbf{64.5}}&\textcolor{RoyalBlue}{\textbf{17.5}} &\textcolor{RoyalBlue}{\textbf{58.6}}\\ 
          \bottomrule
        \end{tabular}
    \end{small}
  \end{center}
  \vskip -0.25in
\end{table}

 \begin{table}[htbp]
  \caption{Adversarial accuracy across eight fine-grained datasets compared A-TPT with training-time defense methods (ViT-B/32).}
    \vskip -0.15in
  \label{Table 8}
  \begin{center}
    \begin{small}
        \begin{tabular}{cccccccccc}
          \toprule
          Method  & Pets         & Caltech101     & Cars &DTD &UCF101 &EuroSAT &Flower102 &Aircraft &Average  \\
          \midrule
          CLIP & 0.2 &0.0 &0.0 &0.0 &0.0 &0.0 &0.0 &0.0 &0.0  \\ 
          TeCoA & 63.7 &78.0 &9.1 &24.0 &33.4 &14.3 &28.9 &6.6 &32.3 \\
          APT+TeCoA & 63.9 &80.2 &18.9 &33.7 &39.4 &29.2 &40.4 &5.2 &38.9  \\
          FARE    & \textcolor{RoyalBlue}{\textbf{73.8}} &\textcolor{RoyalBlue}{\textbf{85.4}} &\textcolor{RoyalBlue}{\textbf{34.4}} &27.3 &41.9 &\textcolor{RoyalBlue}{\textbf{16.3}} &34.0 &9.5 &\textcolor{RoyalBlue}{\textbf{40.3}}       \\
          A-TPT     & 66.4 &79.8 &31.8 &\textcolor{RoyalBlue}{\textbf{31.1}} &\textcolor{RoyalBlue}{\textbf{46.9}} &12.7 &\textcolor{RoyalBlue}{\textbf{43.2}} &\textcolor{RoyalBlue}{\textbf{10.4}} & \textcolor{RoyalBlue}{\textbf{40.3}} \\ 
          \bottomrule
        \end{tabular}
    \end{small}
  \end{center}
  \vskip -0.4in
\end{table}


\end{document}